\title{Gradient Correction in Federated Learning with Adaptive Optimization}
\newcommand{\thickhline}{%
    \noalign {\ifnum 0=`}\fi \hrule height 1pt
    \futurelet \reserved@a \@xhline
}
\newcolumntype{"}{@{\hskip\tabcolsep\vrule width 1pt\hskip\tabcolsep}}
\newcommand{\footnoteremember}[2]{%
\footnote{#2}
\newcounter{#1}%
\setcounter{#1}{\value{footnote}}%
}
\newcommand{\footnoterecall}[1]{%
\footnotemark[\value{#1}]
}
\author{Evan Chen\,\footnoteremember{purdue}{Elmore Family School of Electrical and Computer Engineering, Purdue University, West Lafayette, IN}~~~~~~~~~
Shiqiang Wang\,\footnoteremember{ibm}{IBM T. J. Watson Research Center, Yorktown Heights, NY, USA}~~~~~~~~~
Jianing Zhang\,\footnoterecall{purdue}\\
Dong-Jun Han\,\footnoteremember{yonsei}{Department of Computer Science and Engineering, Yonsei University, Seoul, Korea}~~~~~~~~~
Chaoyue Liu
\,\footnoterecall{purdue}~~~~~~~~~
Christopher Brinton
\,\footnoterecall{purdue}
\date{}
}
\theoremstyle{plain}
\newtheorem{theorem}{Theorem}[section]
\newtheorem{lemma}[theorem]{Lemma}
\theoremstyle{definition}
\newtheorem{assumption}[theorem]{Assumption}
\definecolor{lightred}{rgb}{1, 0.7, 0.4}
\definecolor{lightblue}{rgb}{0.7,0.7,1}
\definecolor{lightgreen}{rgb}{0.7, 1, 0.7}
\newcommand{\com}[1]{\tiny$\pm$#1}
\begin{document}

\maketitle

\begin{abstract}
    In federated learning (FL), model training performance is strongly impacted by data heterogeneity across clients. Client-drift compensation methods have recently emerged as a solution to this issue, introducing correction terms into local model updates. To date, these methods have only been considered under stochastic gradient descent (SGD)-based model training, while modern FL frameworks also employ adaptive optimizers (e.g., Adam) for improved convergence. However, due to the complex interplay between first and second moments found in most adaptive optimization methods, naively injecting correction terms can lead to performance degradation in heterogeneous settings. In this work, we propose {\tt FAdamGC}, the first algorithm to integrate drift compensation into adaptive federated optimization. The key idea of {\tt FAdamGC} is injecting a pre-estimation correction term that aligns with the moment structure of adaptive methods. We provide a rigorous convergence analysis of our algorithm under non-convex settings, showing that {\tt FAdamGC} results in better rate and milder assumptions than naively porting SGD-based correction algorithms into adaptive optimizers. Our experimental results demonstrate that {\tt FAdamGC} consistently outperform existing methods in total communication and computation cost across varying levels of data heterogeneity, showing the efficacy of correcting gradient information in federated adaptive optimization.
\end{abstract}

\section{Introduction}
\label{sec:intro}

Federated Learning (FL) has emerged as a popular framework for collaboratively training machine learning models across decentralized clients~\cite{li2020federated,kairouz2021advances}. Despite its privacy advantages, FL presents unique challenges due to statistical heterogeneity across client data and limited communication bandwidth. These issues often lead to degraded convergence rates and suboptimal global performance.
While stochastic gradient descent (SGD) remains the default choice for local updates in FL, adaptive optimizers such as AdaGrad, RMSProp, and Adam~\cite{duchi2011adaptive,graves2013generating,kingma2014adam} have demonstrated superior performance in centralized settings, with pronounced efficacy in large language model (LLM) training due to their robustness in handling complex loss landscapes~\cite{zhang2024towards}. This has motivated the extension of adaptive optimizers to FL as well~\cite{cheng2023momentum,reddi2020adaptive,wang2022communication}, including for federated LLM training where they are widely adopted to cope with the scale and variability across clients. Nonetheless, the performance of adaptive methods still deteriorate under FL's non-i.i.d. data distributions, highlighting a pressing need for methods that explicitly address the interaction between adaptive optimization and data heterogeneity.

To address data heterogeneity in FL, client-drift compensation methods, such as {\tt SCAFFOLD}~\cite{karimireddy2020scaffold} and {\tt Proxskip}~\cite{mishchenko2022proxskip}, have been proposed, primarily in conjunction with SGD-based updates. 
These methods maintain control variates to estimate and correct for the discrepancy between local (client-side) and global (server-side) gradients, mitigating client-drift and enhancing convergence robustness. Nevertheless, drift compensation methods have not yet been developed adaptive optimization settings such as Adam~\cite{kingma2014adam}.
Motivated by this, in this work, we investigate the following questions:
\begin{enumerate}[leftmargin=*]
\item \textit{Will adaptive optimization algorithms designed using \textbf{client-drift compensation} obtain performance advantages across FL systems as found with their SGD counterparts? }
\item \textit{What is the most effective way to incorporate compensation into adaptive federated optimization to mitigate \underline{\smash{data heterogeneity}} while ensuring \textbf{theoretical convergence guarantees}?}
\end{enumerate}

\textbf{Key Challenges.} The core difficulty in answering these questions stems from the nonlinear structure of adaptive updates, which involve element-wise normalization using gradient history.
Due to the interplay between first and second moments in most adaptive optimization methods, naively injecting correction terms as in the SGD case fails to account for this complexity and, as we will see, can even harm performance in heterogeneous regimes. Consequently, designing effective compensation strategies for tracking first-order information in adaptive methods remains an \textit{open and important challenge} for improving robustness in general federated optimization frameworks.

\textbf{Our Contributions.} We investigate how to correctly compensate client-drift in adaptive federated optimization to ensure stable convergence under data heterogeneity.
Based on our insights, we propose a novel algorithm leveraging the Adam optimizer that efficiently mitigates data heterogeneity by injecting \textit{pre-estimation corrections}, i.e., prior to computing the moment terms. Through rigorous convergence analysis and experimental evaluations, we demonstrate that our algorithm effectively stabilizes the global learning process of FL with adaptive optimizers. In particular, our method demonstrably enhances resilience of FL training to the level of non-i.i.d. data distributions across clients, addressing a critical limitation of adaptive federated optimization techniques.

Our main contributions are as follows:
\vspace{-0.1in}
\begin{itemize}[leftmargin=*]
\item We propose {\tt FAdamGC}, an Adam-based federated optimization algorithm stabilized with a novel gradient correction mechanism. {\tt FAdamGC} effectively tracks global gradient information using control variables which compensate client-drift internally and do not require any additional fine-tuning, efficiently mitigating model biases caused by non-i.i.d. data distributions in FL (Sec.~\ref{ssec:gc_algorithm}).

\item
We conduct a rigorous convergence analysis of our proposed algorithm, producing the first convergence guarantee for an adaptive federated optimization method without relying on the bounded gradient assumption. Our analysis provides insights into the stability and convergence speedup achieved by {\tt FAdamGC} under data heterogeneity, and clarifies the distinct impact of applying parameter tracking at different stages of the local update process (Sec.~\ref{sec:convergence}).

\item We perform extensive experiments of {\tt FAdamGC} across diverse datasets and multiple FL settings, including image classification tasks using CNNs and sequence classification tasks using LLMs. Our results demonstrate substantial improvements in training accuracy and resource utilization compared with baselines under varying levels of non-i.i.d. client data distributions (Sec.~\ref{sec:exp}).
\end{itemize}
\vspace{-1mm}
\section{Related Works}
\vspace{-1mm}

\textbf{Client-Drift Compensation in FL.}  Gradient Tracking (GT) methods \cite{di2016next,nedic2017achieving,tian2018asy,koloskova2021improved,carnevale2022gtadam,takezawa2022momentum,wang2024momentum,patel2022towards} have been proposed to address data heterogeneity challenges in decentralized optimization algorithms through the incorporation of drift corrections. The core principle of GT lies in tracking global gradient information during each communication round, ensuring more accurate gradient estimates across the system. Algorithms such as {\tt SCAFFOLD}~\cite{karimireddy2020scaffold} and {\tt Proxskip}~\cite{mishchenko2022proxskip} have been designed based on this concept for the conventional client-server FL setting. Multiple works in serverless FL have also showed performance improvement from GT \cite{liu2023decentralized,ge2023gradient,berahas2023balancing,alghunaim2024local} in both accuracy and resource efficiency. 
Furthermore, studies have shown that with GT, under proper initialization of correction variables, assumptions on data heterogeneity required in FL analysis can be relaxed.

Recent advancements have also extended correction methods to address hierarchical network structures. {\tt SDGT} was introduced as the first GT algorithm tailored for semi-decentralized networks~\cite{chen2024taming}, bridging the gap between fully decentralized and centralized topologies. Meanwhile,~\cite{fang2024hierarchical} proposed {\tt MTGC}, a multi-timescale GT algorithm incorporating hierarchical tracking terms in multi-tier networks. 
Despite these advancements, existing works on GT in FL have focused on SGD-based training, leaving the integration of GT with adaptive optimizers largely unexplored and an open challenge.

\textbf{Adaptive Optimizers.} SGD optimizers rely on fixed or decaying learning rates, which often require careful tuning and may struggle with scenarios involving sparse gradients or noisy updates. To address these limitations, adaptive optimizers dynamically adjust learning rates based on the gradient history of individual parameters, enabling more effective navigation of complex optimization landscapes. Among the most prominent adaptive optimizers are {\tt AdaGrad}~\cite{duchi2011adaptive} and {\tt Adam}~\cite{kingma2014adam}. 
Recent advancements have further explored the decoupling of weight decay~\cite{loshchilov2017decoupled} and the time-varying effects of regularization terms~\cite{xie2024overlooked} in adaptive optimizers.

Several approaches have been proposed to integrate adaptive optimizers into FL. {\tt FedAvg-M} and {\tt SCAFFOLD-M} \cite{cheng2023momentum} developed additional globally-tracked momentum terms to assist local updates. Methods like {\tt FedAdam} and {\tt FedAMS} employ an adaptive optimizer at the server to update the global model using aggregated client gradients \cite{reddi2020adaptive,wang2022communication}. On the other hand, \cite{xie2019local,sun2023efficient} incorporate adaptive optimization directly on local clients. 
However, none of these works have tackled the data heterogeneity challenge with adaptive federated optimizers. As a result, the required frequency of global aggregations increases significantly with the level of non-i.i.d. data, as we will see in Sec.~\ref{sec:exp}. 

\section{Background and Preliminaries}
\begin{figure*}[t]
    \centering
    \includegraphics[width=1.0\linewidth]{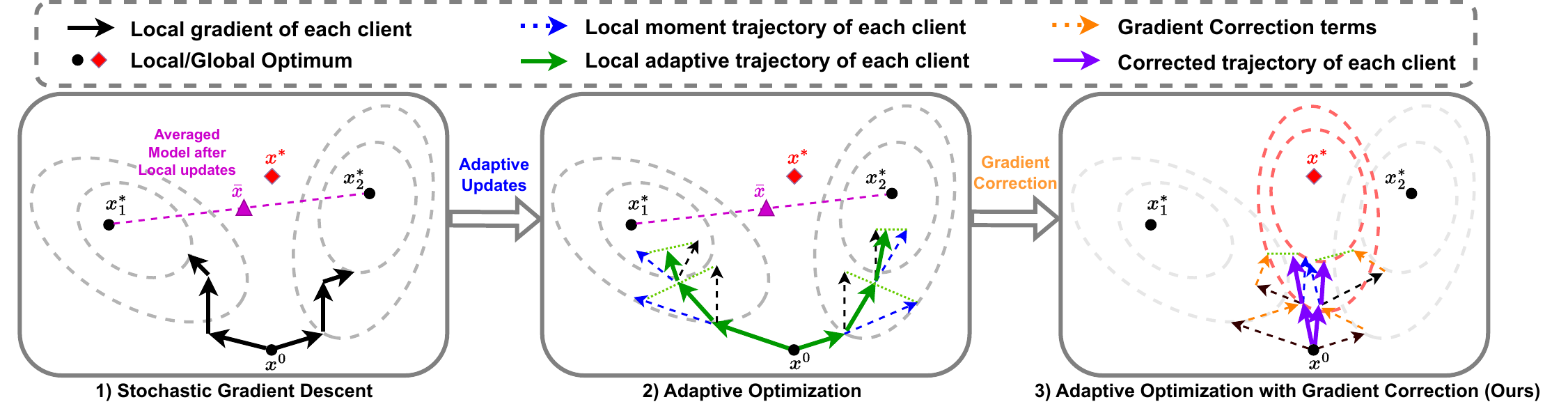}
    \caption{Visualization of the local update process under adaptive optimization with gradient correction. While adaptive methods help smooth the optimization trajectory, clients may still drift toward local optima due to data heterogeneity, preventing them from reaching globally optimal solutions even with federated cooperation. Gradient correction steers updates toward the global objective, mitigating client-drift to stabilize training. This combination blends the fast convergence of adaptive optimizers and the stability of correction-based methods. 
    }
    \label{fig:GT_visualize}
    \vspace{-3mm}
\end{figure*}
\textbf{System Model.}
The problem we aim to solve follows the standard FL formulation:
\begin{align}
    \textstyle \min_{x\in \mathbb{R}^d}f(x) &\textstyle= \frac{1}{n}\sum_{i=1}^n f_i(x),
\end{align}
where $f_i(x) = \mathbb{E}_{\xi_i \sim \mathcal{D}_i}f_i(x;\xi_i)$ is the expectation of the stochastic local function, and $n$ is the total number of clients (typically edge devices) in the system, indexed $i = 1, \ldots, n$. $f_i(x)$ is the local machine learning loss function computed at client $i$ for model parameters $x \in \mathbb{R}^d$, $\mathcal{D}_i$ is the local data distribution at client $i$, and $\xi_i$ is an unbiased random sample from $\mathcal{D}_i$. We assume the server is directly connected
to each device as in the conventional FL architecture. 

The training process operates on two distinct timescales: an outer timescale and an inner timescale. The outer timescale, denoted as $t = 1, 2, \ldots, T$, represents global aggregation rounds where the central server updates the global model. The inner timescale, denoted as $k = 1, \ldots, K$, represents local training steps performed by each client between global aggregations. We assume a fixed number of $K$ local updates occur between two consecutive global aggregation rounds.

For each global iteration $t$, the training procedure can be described in three iterative steps:
(i) \textit{Client Selection and Initialization}: At each global round $t$, the server selects a subset of clients $\mathcal{S}^t \subseteq \{1, \ldots, n\}$, where $|\mathcal{S}^t| = S \leq n$. The global model is broadcasted to the selected clients to initialize local training.
(ii) \textit{Local Model Updates}: Each selected client performs $K$ local updates using a local optimizer, updating their local models based on their respective datasets.
(iii) \textit{Global Model Aggregation}: After completing $K$ local updates, the selected clients send their updated model parameters to the server. The server then aggregates these updates to refine the global model.

\textbf{Federated Adaptive Optimization.}
The adaptive algorithm we focus in this work is Adaptive Moment Estimation (Adam), an optimization algorithm that combines the benefits of momentum and adaptive learning rates~\cite{kingma2014adam}. At each iteration, Adam maintains an exponential moving average of the gradient (first moment) and the squared gradient (second moment). Given a stochastic gradient $g_i^{(t,k)}$ computed by client $i$ at step $t,k$, the update rules are:
 \begin{alignat}{2}
\textstyle m_i^{(t,k)} &\textstyle = \beta_1 m_i^{(t,k-1)} + (1 - \beta_1) g_i^{(t,k)} \notag\\
\textstyle v_i^{(t,k)} &\textstyle= \beta_2 v_i^{(t,k-1)} + (1 - \beta_2) g_i^{(t,k)}\odot g_i^{(t,k)} \notag\\
\textstyle x_i^{(t,k)} &\textstyle= x_i^{(t,k-1)} - \eta_l  \frac{m_i^{(t,k-1)}}{\sqrt{v_i^{(t,k-1)}} + \epsilon},
\label{eq:Adam_update_formulation}
\end{alignat}
where $\beta_1, \beta_2 \in [0,1)$ are decay rates, $\eta_l$ is the local learning rate, $\odot$ is the element-wise multiplcation, and $\epsilon$ is a small constant for numerical stability.
The placement of adaptive optimizer in FL, on the server or the clients, has been a topic of ongoing debate~\cite{sun2023efficient}. Prior work has shown that server-side adaptive methods, such as {\tt FedAdam}, are more susceptible to gradient noise and tend to degrade as local updates $K$ increase. In contrast, approaches like {\tt LocalAdam}~\cite{sun2023efficient}, which apply Adam locally on clients and use averaging at the server, offer greater training robustness. Based on these findings, we adopt the design where adaptive optimizers are performed on clients, and the server applies averaging.

\textbf{Drift Compensation on SGD}. To address client-drift in SGD-based updates, {\tt SCAFFOLD}~\cite{karimireddy2020scaffold} employs control variates that adjust for local gradient discrepancies. Each client maintains a local correction term 
$y_i^t$
 , while the server maintains a global control variate $y^t$. The local update rule is:
\begin{align}
\textstyle x_i^{t,k+1} & \textstyle= x_i^{t,k} - \eta_l ( g_i^{(t,k)} + y_i^t - y^t ),\notag\\
\textstyle y_i^{t+1} &\textstyle= y_i^t - y^t + \frac{1}{\eta_l K}(x_i^t - x_i^{t,K}), \label{eq:scaffold_update}
\end{align}
where $y_i^t$ and $y^t$ are the client and server control variates, respectively. These correction terms track the gradient differences between local and global objectives, mitigating the effect of client-drift.

\section{Design of {\tt FAdamGC}}

\subsection{Motivation and Challenges}\label{sec:41}
\textbf{Why is Drift Compensation Needed?}
To motivate a more principled correction strategy for adaptive optimization in federated settings, we begin by considering the fixed-point solution $x^*$ that satisfies the global optimality condition $\nabla f(x^*) = 0$.
For clarity, we also assume the moment estimates at convergence are zero, i.e., $m^* = v^* = 0$ 
, consistent with typical Adam behavior under vanishing gradients.
Under standard Adam-style updates, this fixed optimal point is not preserved when optimizing local functions $f_i$. Specifically, based on \eqref{eq:Adam_update_formulation}, the update rule
$
x^* \neq x^* - \eta_l \frac{\beta_1 m^* + (1 - \beta_1) \nabla f_i(x^*)}{\sqrt{\beta_2 v^* + (1 - \beta_2) \nabla f_i(x^*) \odot \nabla f_i(x^*)} + \epsilon} $
fails to satisfy the optimal point in general due to the fact that $\nabla f_i(x^*) \neq 0$ when $f_i$ differs from $f$. This misalignment arises from data heterogeneity across clients and leads to slower convergence or even divergence in non-IID settings.
To address this challenge, various correction-based methods have been proposed to compensate for client-drift and stabilize training in SGD settings~\cite{karimireddy2020scaffold, mishchenko2022proxskip, chen2024taming, fang2024hierarchical}. These techniques aim to align local updates by incorporating drift compensation, thereby restoring the fixed-point structure needed for consistent convergence across heterogeneous clients.

\textbf{Problem with Naive Application of Compensation.}
A natural yet naive approach to track client-drift correction in adaptive federated optimization is to compute correction terms using the total model update $\frac{1}{\eta_l K}(x_i^t - x_i^{t,K})$ from all clients across the network. Similar to {\tt SCAFFOLD}, the correction term $y_i^{t+1}$ averaged across all updates from $x_i^{t,k}$ from $k=1$ to $K$, and this information is aggregated by the server to mitigate data heterogeneity across all clients in the next communication round $t+1$.
Specifically, given a local update direction $\Delta_i^{t,k}$, one could define the client update as
$
x_i^{t,k+1} = x_i^{t,k} - \eta_l ( \Delta_i^{t,k} + y^t - y_i^t ),
$
where $y_i^t$ and $y^t$ represent local and global correction buffers, respectively. The correction terms are then updated using~\eqref{eq:scaffold_update}.
In the case of SGD, the correction term $y_i^{t+1}$ corresponds to the average of locally computed gradients:
$
y_i^{t+1} = \frac{1}{K} \sum_{k=1}^K \nabla f_i(x_i^{t,k}; \xi_i^{t,k}).
$
However, this equivalence breaks down when adaptive methods like Adam are used for local updates. In this setting, the update direction is no longer a gradient, but instead involves adaptive scaling of moment estimates. Consequently, the correction term $y_i^t$ in this naive tracking setup becomes an average of local adaptive directions:
$
y_i^{t+1} = \frac{1}{K} \sum_{k=1}^K \frac{m_i^{t,k}}{\sqrt{\hat{v}_i^{t,k}} + \epsilon},
$
where $m_i^{t,k}$ and $\hat{v}_i^{t,k}$ denote the first and second moment estimates at step $k$. We refer to this naive approach as \textit{Federated Adaptive Moment Estimation with Naive Tracking} ({\tt FA-NT}).
Despite its simplicity, we show empirically 
that {\tt FA-NT} fails to provide robust convergence under data heterogeneity in Appendix~\ref{appen:naive_tracking_alg_and_rate}.
The failure stems from the incompatibility between the correction mechanism and the internal structure of adaptive methods. In particular, due to the nonlinearity and history-dependence introduced by moment estimation, the fixed-point condition is still not satisfied:
$
x^* \neq x^* - \eta_l (\frac{\beta_1 m^* + (1 - \beta_1) \nabla f_i(x^*)}{\sqrt{\beta_2 v^* + (1 - \beta_2) \nabla f_i(x^*) \odot \nabla f_i(x^*)} + \epsilon} + \frac{\nabla f(x^*)}{\sqrt{\nabla f(x^*)\odot \nabla f(x^*)} + \epsilon} - \frac{\nabla f_i(x^*)}{\sqrt{\nabla f_i(x^*)\odot \nabla f_i(x^*)} + \epsilon})$, because the globally optimal model $x^*$ may not be optimal for each client's local loss, i.e., $\nabla f_i(x^*)\neq 0$, due to data heterogeneity.

\subsection{Federated Adam with Gradient Correction ({\tt FAdamGC})}
\label{ssec:gc_algorithm}
\textbf{Key Idea.} Our idea is to mitigate the client-drift in adaptive FL by injecting a pre-estimation correction term that directly adjusts the gradient input to moment accumulation. Specifically, we observe that adding the correction $\nabla f(x^*) - \nabla f_i(x^*)$ before computing the moment terms ensures that $x^*$ becomes a fixed point of the modified update:
\vspace{-2mm}
\begin{align}
  &x^* 
  = x^* \notag \\- &\eta_l \frac{\beta_1 m^* + (1 - \beta_1)( \nabla f_i(x^*) + \overbrace{ \nabla f(x^*) - \nabla f_i(x^*)}^{\text{pre-estimation correction}} )}{\sqrt{\beta_2 v^* + (1 - \beta_2) \nabla f_i(x^*) \odot \nabla f_i(x^*)} + \epsilon}.
\end{align}
Unlike post-estimation correction strategies such as those used in Naive Tracking, this approach aligns local updates more effectively with the global descent direction, thereby reducing the impact of data heterogeneity and stabilizing training. While the exact correction term $\nabla f(x^*) - \nabla f_i(x^*)$ is not accessible in practice, it can be approximated using gradient information from local updates.

\begin{algorithm*}[t]

\SetInd{0.15em}{0.5em}
{\small
\caption{\footnotesize {\tt FAdamGC}: Federated Adaptive Moment Estimation with Gradient Correction}
\label{alg:FAdam}
\KwInput{$T$, batch size $|\xi_i^{(t,k)}|$, initial model $x^{(1)}$}

\GlobalFor{$t = 1, \ldots, T$}{
 sample clients $\mathcal{S}^t\subseteq \{1, \ldots, n\}$ and sample clients for update tracking terms $\widetilde{\mathcal{S}}^t \subseteq \mathcal{S}^t$\\
server broadcasts $(x^{(t)}, y^{(t)})$ to all clients $i\in \mathcal{S}^t$\\

\ClientFor{$i \in \mathcal{S}^t$}{
$x_i^{(t,1)} = x^{(t)}$, $m_i^{(t,1)} = 0$, $v_i^{(t,1)} = v_i^{(t)}$\\
\LocalFor{$k = 1, \ldots, K$}{

         Compute batch gradient $g_i^{(t,k)}$, \colorbox{lightred}{set moment estimation vector $\hat{g}_i^{(t,k)} = g_i^{(t,k)} + y^{(t)} - y_i^{(t)}$} \\
         \colorbox{lightblue}{Compute first \& second moment with corrected gradient $m_i^{(t,k+1)} =  \beta_1m_i^{(t,k)} + (1-\beta_1) \hat{g}_i^{(t,k)}$\label{algeq:adam_start},}
         \colorbox{lightblue}{$v_i^{(t,k+1)} = \beta_2v_i^{(t,k)} + (1-\beta_2) \hat{g}_i^{(t,k)} \odot \hat{g}_i^{(t,k)}$}, and set
         $\hat{v}_i^{(t,k+1)} = \max (\hat{v}_i^{(t,k)}, v_i^{(t,k+1)})$\\
         Let $\Delta_i^{(t,k)} = m_i^{(t,k+1)}/(\sqrt{\hat{v}_i^{(t,k+1)}} + \epsilon)$  and perform local update
        $x_i^{(t,k+1)} = x_i^{(t,k)} - \eta_l\Delta_i^{(t,k)}$ \label{algeq:adam_end}\\

}
\uIf{$i \in \mathcal{\widetilde{S}}^t$}{

       \colorbox{lightred}{$y_i^{(t+1)} = \frac{1}{K}\sum_{k=1}^K g_i^{(t,k)}$}

  }
  \uElse{$y_i^{(t+1)} = y_i^{(t)}$}
    $v_i^{(t+1)} = v_i^{(t,K+1)}$\\

}
 Server aggregates $x_i^{(t,K+1)} - x^{(t)}$ from clients $i \in \mathcal{S}^{t}$, and $y_i^{(t+1)} - y_i^{(t)}$ from clients $i \in \widetilde{\mathcal{S}}^{t}$.\\
 $x^{(t+1)} = x^{(t)} +  \frac{\eta_g}{S}\sum_{i\in \mathcal{S}^{t}} (x_i^{(t,K+1)} - x^{(t)})$.\\
 $y^{(t+1)} = y^{(t)} + \frac{1}{n}\sum_{i\in \mathcal{Y}^{t}} (y_i^{(t+1)} - y_i^{(t)})$\\
}
}
\end{algorithm*}
\setlength{\textfloatsep}{3pt}

\textbf{FAdamGC Algorithm.} Given this intuition, we propose \textit{Federated Adaptive Moment Estimation with Gradient Correction} ({\tt FAdamGC}). In contrast to {\tt FA-NT} described in Sec. \ref{sec:41}, in {\tt FAdamGC}, gradient correction (GC) updates the correction buffer to track the averaged raw stochastic gradients \textit{before} moment estimation:
$
y_i^{t+1} = \frac{1}{K} \sum_{k=1}^K g_i^{t,k}
$.
As shown in Figure~\ref{fig:GT_visualize}, this gradient-level correction is then injected directly into the moment computation, effectively modifying both the first and second moment estimates. 

As shown in Algorithm~\ref{alg:FAdam}, during each global iteration $t$, the server samples a set of clients $\mathcal{S}^t$ with size $S$ for training. 
During the start of each local training interval, the server broadcasts the global model $x^{(t)}$ and the global correction term $y^{(t)}$ to all sampled clients $\mathcal{S}^t$. Then, for each sampled client $i$ at local iteration $k$, stochastic gradient $g_i^{(t,k)} = \nabla f_i(x_i^{(t,k)}, \xi_i^{(t,k)})$ is computed locally using the local model $x_i^{(t,k)}$. 
After each client $i$ computes $g_i^{(t,k)}$, the gradient correction is added to the gradient:
$
    \textstyle \hat{g}_i^{(t,k)} = g_i^{(t,k)} + y^{(t)} - y_i^{(t)}. \notag
$
The adaptive local update direction $\Delta_i^{(t,k)}$ then calculated using $\hat{g}_i^{(t,k)}$. In this work, we use the Adam optimizer as shown in Line \ref{algeq:adam_start}--\ref{algeq:adam_end} of Algorithm~\ref{alg:FAdam}, but it is possible for a more general framework where other adaptive optimizers are considered. 

\textbf{Enhanced Communication Efficiency via Selective Tracking.}
Correction-based methods typically require additional communication overhead to update correction terms, which can offset their optimization benefits in bandwidth-constrained settings. To address this, we introduce a \textit{Selective Tracking} mechanism that improves communication efficiency by updating correction terms on only a subset of clients. At each round $t$, only a randomly selected subset $\widetilde{\mathcal{S}}^t \subseteq \mathcal{S}^t$, with cardinality $\widetilde{S} \leq S$, participates in tracking updates. Our experiments demonstrate that even with $\widetilde{S} < S$, the proposed method achieves comparable performance to full participation while significantly reducing communication cost.
After $K$ local steps, clients in $\mathcal{S}^t$ aggregate their models to update the global model $x^{(t)}$, while those in $\widetilde{\mathcal{S}}^t$ aggregate their correction terms to update $y^{(t)}$ on the server.

\section{Convergence Analysis}
\label{sec:convergence}
We present the convergence analysis of {\tt FAdamGC} in this section. The detailed proofs, including of the intermediate lemmas, can be found in Appendix~\ref{appen:fadamgt} and ~\ref{appen:GT_special}.
\begin{assumption}[General Characteristics of Loss Functions]\label{assump:genLoss} 
1) 
  Each local loss $f_i$ is $L$-smooth $\forall i\in \{1,\ldots,n\}$, i.e., $\Vert \nabla f_i(x_1)-\nabla f_i(x_2)\Vert \leq L\Vert x_1-x_2 \Vert, ~\forall x_1, x_2 \in \mathbb{R}^d.$
  2)     Consider ${ n}_{i}^{(t,k)}={ g}_{i}^{(t,k)}-\nabla f_i( x_{i}^{(t,k)})$ as the noise of the gradient estimate through the SGD process for device $i$ at time $t,k$. The noise variance is upper bounded by $\sigma^2 > 0$, i.e., $\mathbb{E}[\Vert{ n}_{i}^{(t,k)}\Vert^2]\leq \sigma^2~\forall i,t,k$.

\end{assumption}

In Theorem~\ref{thm:pt_special}, we first analyze the convergence behavior for general non-convex loss functions in a special case where $\beta_2 = \epsilon = 0$. In this regime, the local update direction $\Delta_i^{t,k}$ is contractive, eliminating the need for bounded gradient assumptions and yielding tighter convergence bounds. 
\begin{theorem}
\label{thm:pt_special}
Let $\beta_2 = \epsilon = 0$, by selecting  $\eta_g\eta_l = \min\Big\{\frac{\sqrt{\mathcal{F}n}}{\sqrt{\sigma^2 KTL}}, \frac{\mathcal{F}}{T}\Big\}$, $\beta = \sqrt[K]{\frac{KN - 2T}{2KN}}$, $\eta_l \leq \frac{1}{T}$, 
under Assumption~\ref{assump:genLoss}, the iterates of {\tt FAdamGC} can be bounded as:
\begin{equation}
    \textstyle\frac{1}{T}\sum_{t=1}^T\mathbb E\|\nabla f(x^t)\| = \mathcal{O}\left(\sqrt{\frac{L\mathcal{F}\sigma^2}{nKT}} + \frac{L\mathcal{F}}{T} + \frac{LK}{T}+\frac{K\sigma}{T}\right).
\end{equation}
\end{theorem}

\textbf{Novelty in the Proof.}
A key novelty in the proof of Theorem~\ref{thm:pt_special} is that our local progression is internally controlled by the adaptive learning rate. 
When $\beta_2 = \epsilon = 0$, the local updates on each client will be bounded by the values of $\hat{v}_i^{t,k}$, i.e. $\|x_i^{t,k} - x_i^{t,k-1}\| \leq \eta_l$. This intrinsic bound eliminates the need for gradient boundedness assumptions and allows local updates to adapt flexibly to the geometry of the loss landscape, enabling more effective and assumption-light analysis.

\textbf{Remark.}
This result in Theorem~\ref{thm:pt_special} demonstrates that {\tt FAdamGC}, under milder assumptions than {\tt FedAdam} and {\tt FedAMS}, achieves convergence without requiring bounded gradients or explicit data heterogeneity conditions, which are properties shared by correction-based methods such as {\tt SCAFFOLD}. In contrast, as shown in Theorem~\ref{thm:nt_special}, the naive tracking variant {\tt FA-NT} still requires bounded data heterogeneity to ensure convergence. Our empirical results reinforce this theoretical distinction: the performance gap between {\tt FAdamGC} and {\tt FA-NT} widens as data heterogeneity increases, emphasizing the robustness of GC in practical federated settings. 

We now present our theoretical result under any $\beta_2 > 0$, showing that the average of global loss gradient can attain linear speedup convergence to a stationary point under non-convex problems. 

\begin{assumption}[Bounded Gradient] \label{assump:BG}
The norm of the loss function $\ell(\cdot)$ is bounded by a constant $G$, i.e.,  
$\Vert{ g}_{i}^{(t)}\Vert \leq G, ~\forall i, t$.\footnote{ The bounded gradient assumption is a necessary condition for \texttt{Adam}-based methods, as controlling the behavior of the second moment relies on a universal bound on the gradient's magnitude. This assumption is widely adopted in numerous analysis of \texttt{Adam}-based algorithms~\cite{kingma2014adam,zou2019sufficient,reddi2020adaptive,sun2023efficient}.}

\end{assumption}

\begin{table*}[t]
\caption{Convergence rate comparisons across multiple adaptive methods. BDH stands for bounded data-heterogeneity, BG stands for bounded gradient norm, and $\mathcal{F}$ is the initial function gap $\mathbb{E}f(x^1) - f^*$. We can see that all methods have the same general $\mathcal{O}(1/\sqrt{nKT})$ non-convex convergence rate.
\vspace{-1em}}
\label{table:conv_rate}
\begin{center}
\resizebox{0.85\textwidth}{!}{
\renewcommand{\arraystretch}{1.7}
\begin{small}
\begin{tabular}{ccccccc}
\toprule
Algorithms & Convergence Rate & \makecell{Additional  Assumptions }
\\
\midrule

FedAdam~\cite{reddi2020adaptive} &$(\frac{\mathcal{F}^2}{nKT})^{\frac{1}{2}} + \frac{L\sigma^2}{G^2\sqrt{nKT}} + \frac{\sigma^2}{GKT} + \frac{L\sigma^2\sqrt{n}}{G^2\sqrt{K}T^{3/2}}$ &BG \\
\hline
FedAMS~\cite{wang2022communication} & $(\frac{\mathcal{F}^2}{nKT})^{\frac{1}{2}} + \frac{L\sqrt{nK}G^2}{\sqrt{\epsilon}T} + \frac{L^2 K \sigma^2}{T} + \frac{G\sigma^2}{\sqrt{\epsilon^2nK}T}$ & BG\\
\hline
FA-NT ($\beta_2=\epsilon=0$, Thm. \ref{thm:nt_special})& $(\frac{ L\mathcal{F}\sigma^2}{nKT})^{\frac{1}{2}} + \frac{L\mathcal{F}}{T}+\frac{K(\sigma + L + nB)}{T}$ &BDH\\
\hline
FA-NT (Thm. \ref{thm:FA-NT})& $(\frac{L\mathcal{F}\sigma^2}{nKT})^{\frac{1}{2}} + \frac{L\mathcal{F}}{T} +\frac{KG^6}{\epsilon^2T}+\frac{K^2(\sigma^2+(1+\epsilon^2)G^2)}{\epsilon^2T}$ &BG\\

\hline
\rowcolor{blue!20}FAdamGC ($\beta_2 =\epsilon=0$, Thm. \ref{thm:pt_special})& $(\frac{ L\mathcal{F}\sigma^2}{nKT})^{\frac{1}{2}}+ \frac{L\mathcal{F}}{T}+\frac{K(\sigma + L)}{T}$& - \\
\hline
\rowcolor{blue!20}FAdamGC (Thm. \ref{thm:FAdamGC})& $(\frac{L\mathcal{F}\sigma^2}{nKT})^{\frac{1}{2}} + \frac{L\mathcal{F}}{T} +\frac{KG^6}{\epsilon^2T}+\frac{K(\sigma^2+(1+\epsilon^2)G^2)}{\epsilon^2T}$ &BG\\
\bottomrule
\end{tabular}
\end{small}
}\\
\end{center}
\end{table*}

\begin{theorem}
\label{thm:FAdamGC}
Under Assumptions \ref{assump:genLoss} and \ref{assump:BG}, define $\mathcal{F} = \mathbb{E} f(x^{(1)}) - f^*$, and let the global $\eta_g$ and local $\eta_l$ step sizes satisfy 
$
\eta_g\eta_l = \min\Big\{\frac{(1-\beta_1)\beta_1}{8(G+\epsilon)KL}, \frac{(1-\beta_1)\beta_1}{12(G+\epsilon)TL}, \frac{(G+\epsilon)\sqrt{\mathcal{F}n}}{(1-\beta_1)\beta_1\sigma\sqrt{TKL}}\Big\}
$ and
$
    \eta_l \leq \frac{(1-\beta_1)\beta_1\epsilon}{40(G+\epsilon){T}L}
$.
The iterates of {\tt FAdamGC} can be bounded as:
\begin{align}
&\textstyle\frac{1}{T}\sum_{t=1}^T\mathbb E\|\nabla f(x^t)\|^2\\
&\textstyle=\mathcal{O}\left(\sqrt{\frac{L\mathcal{F}\sigma^2}{nKT}} + \frac{L\mathcal{F}}{T} + \frac{KG^6}{\epsilon^2T} + \frac{K(\sigma^2 + (1+\epsilon^2)G^2)}{\epsilon^2T}\right).
\end{align}
\end{theorem}

\textbf{Novelty in the Proof.}
A key technical contribution in the proof of Theorem~\ref{thm:FAdamGC} lies in efficiently bounding the deviation of the moment estimates. 
Unlike SGD-based methods where updates directly involve the current stochastic gradient $\nabla f_i(x_i^{t,k})$, the first moment $m_i^{t,k}$ involves a linear combination of historical gradients.
This introduces significant challenges in controlling the deviation of $m_i^{t,k}$ during local training, since naively bounding this often leads to an unfavorable higher dependence on the number of local steps $K$.
In our analysis, we show that by forgoing the bias correction design of Adam, the local deviation of $x_i^{t,k}$ can be recursively controlled under any $\beta_1 \in [0,1)$.

\textbf{Remark.}
Theorem~\ref{thm:FAdamGC} establishes that {\tt FAdamGC} achieves linear speedup convergence to a stationary point, with the global model $x^{(t)}$ satisfying a rate of $\mathcal{O}(1/\sqrt{nKT})$ for sufficiently large $T$. This primary term aligns with existing methods in Table~\ref{table:conv_rate}. When compared to the rates of {\tt FedAdam} and {\tt FedAMS} in Table~\ref{table:conv_rate}, we observe a critical difference, where both methods lack dependence on the gradient variance $\sigma$ in their dominanting terms. As a result, their convergence cannot be effectively influenced by tuning the batch size. In contrast, the rate of {\tt FAdamGC} explicitly incorporates $\sigma$, offering improved adaptability in real-world federated learning deployments. When compared with the rate of {\tt FA-NT}, despite sharing the same dominating term rate, {\tt FA-NT} incurs an additional $K$-value in the final term, with $\mathcal{O}(\frac{K^2(\sigma^2 + (1+\epsilon^2)G^2)}{\epsilon^2T})$ instead of $\mathcal{O}(\frac{K(\sigma^2 + (1+\epsilon^2)G^2)}{\epsilon^2T})$, and {\tt FA-NT} imposes stricter constraints on the selection of local step sizes. Notably, both results rely on the bounded gradient assumption, which limits the theoretical separation between Naive Tracking and GC.

\section{Experiments}
\label{sec:exp}
\textbf{Setup.}
In the baseline comparisons on image tasks, we consider three widely used datasets: CIFAR-10, CIFAR-100~\cite{krizhevsky2009learning} and TinyImageNet~\cite{le2015tiny}. For all three datasets, we adopt the ResNet-18 model. We set the total number of clients as $n = 100$, the client sampling rate $\frac{S}{n}$ to $10\%$, and set the number of local iterations $K = 60$. Furthermore, we conducted experiments on Large Language Models (LLMs). We tested on a Parameter-Efficient Fine-Tuning (PEFT) algorithm where only a limited amount of the LLM's parameters are trained using Low Rank Adaptation (LoRA) modules~\cite{hu2022lora}.
We use the GPT-2 model~\cite{radford2019language}, and set the total number of clients as $n = 100$ and the client sampling rate to $10\%$. We tested on two datasets, 20NewsGroups and the GLUE benchmark~\cite{lang1995newsweeder,wang2018glue}. To generate non-i.i.d. data distribution, each dataset is distributed among all clients through a Dirichlet distribution, and the Dirichlet parameter is set to $\alpha = 0.1$. The mean and standard deviation is based on four random trials. All learning rates for each dataset are listed in Appendix~\ref{appen:lr_and_target_acc}.

We compared our algorithm with several FL methods: 1) {\tt FedAvg-M}, where local updates are performed using SGD optimizer with momentum, 2) {\tt SCAFFOLD-M}, where local updates are performed using SGD optimizer with client-drift correction and momentum, 3) {\tt FedAdam/FedAMS} where the Adam is used for the server updates, and 4) {\tt LocalAdam}, where the local updates are performed using Adam optimizer. 
We perform a grid search through $\eta_l \in [10^{-4}, 10^{-1}]$ and $\eta_g \in [10^{-3}, 1]$, and plot the best performing results. 
We set $(\beta_1, \beta_2) = (0.9, 0.99)$ and $\epsilon = 10^{-8}$ for all Adam optimizers. All mean and standard deviation is based on four random trials.

We evaluate two key metrics:
1) \textit{Total global rounds}, measured by the number of communication rounds $T$, which reflects the computational efficiency of each method; and
2) \textit{Simulated run time}, which estimates the training duration with each client gradient computation performed on a NVIDIA A100 Tensor Core GPU and client-server communication occurs over 100 Mbps links. This metric captures the practical impact of both computation and communication on overall system performance.

\begin{table*}[t]
\caption{Comparison of {\tt FAdamGC} with multiple baselines on multiple datasets. For all CIFAR-10 experiments, the target accuracy is 75\%, and for CIFAR-100, the target accuracy is set at 50\%, while for TinyImageNet, it is set at 30\%. The target accuracy for SST-2 is set to 85\% and for the other language tasks are set to 75\%. We see that {\tt FAdamGC} outperforms all baselines in most experiments under both settings. \vspace{-1em}}
\label{table:baseline}
\begin{center}
\resizebox{0.98\textwidth}{!}{
\setlength{\extrarowheight}{3pt} 
\begin{small}
\begin{tabular}{c|ccccccc
c|
>{\columncolor[rgb]{0.85,0.9,0.9}}c}
\Xhline{1.5pt}
Settings & Task Type & Dataset & FedAvg-M~\cite{cheng2023momentum} & SCAFFOLD-M~\cite{cheng2023momentum} & FedAdam~\cite{reddi2020adaptive} & FedAMS~\cite{wang2022communication} & LocalAdam & FA-NT & FAdamGC\\
\Xhline{1.5pt}
 
\multirow{6}{*}{\makecell{Total \\
Global \\Rounds}}
  & \multirow{3}{*}{\makecell{Image \\
Tasks}} 
& CIFAR-10 & 1014.5\com{250.3} & 544.3 \com{59.9} & 2532.5 \com{343.3} & 2388.8 \com{286.6} & 589.5\com{74.0}& 394.8\com{31.3} & \textbf{310.0}\com{16.8}\\
\hhline{~~--------}
& & CIFAR-100 & 998.5\com{88.5} & 621.8\com{55.4} & 1854.0\com{185.3} & 1654\com{156.4} & 678.3\com{40.6}& 530.3\com{17.6} & \textbf{323.8}\com{16.3}\\
\hhline{~~--------}
& & TinyImageNet & 215.5\com{10.4} & 242.2\com{15.4} & 543.2\com{45.2} & 463.5\com{35.7} & 177.3\com{8.3} &157.0\com{6.4} & \textbf{66.3}\com{4.4} \\
\hhline{~---------}
\hhline{~---------}
\hhline{~---------}

& \multirow{4}{*}{\makecell{Language \\
Tasks}} & 20NewsGroups & 245.5\com{27.6}& 214.0\com{17.7} & 247.0\com{8.1}& 224.5\com{5.4} & 156.8\com{7.8} & 155.0\com{7.0}& \textbf{143.3}\com{4.1}\\
\hhline{~~--------}
& & QNLI & 171.3\com{41.2}& 162.5\com{37.4}& 137.5\com{10.2}& 145.0\com{9.8}& 117.0\com{10.4} & 99.8\com{11.2} & \textbf{55.5}\com{16.3}\\
\hhline{~~--------}
& & QQP & 316.5\com{64.3} & 299.0\com{70.3} & 245.3\com{20.5} & 267.8\com{22.1} & 213.0\com{53.8} & 196.3\com{5.1}  & \textbf{63.0}\com{4.6}\\
\hhline{~~--------}
& & SST-2 & 150.5\com{36.1} & 129.8\com{32.5} & 84.3\com{4.2} & 78.8\com{4.4} & 47.0\com{5.8}& 48.8\com{8.4} & \textbf{30.3}\com{7.3}\\
\Xhline{1.5pt}

\multirow{6}{*}{\makecell{Simulated \\ Run Time\\(minutes)}}
&\multirow{3}{*}{\makecell{Image \\
Tasks}} 
& CIFAR-10 & 182.6\com{45.1} &157.8\com{17.38} &303.9\com{41.2} & 286.7\com{34.4} &  70.7\com{8.9}& 82.7\com{6.6} & \textbf{65.1}\com{3.5} \\
\hhline{~~--------}
&& CIFAR-100 & 179.7\com{15.9} & 180.3\com{16.1}& 222.5\com{22.2}& 198.5\com{18.8} & 81.4\com{4.9}&111.4\com{3.7} &\textbf{68.0}\com{3.4} \\
\hhline{~~--------}
&& TinyImageNet & 38.8\com{1.9}&70.2\com{4.5} & 65.2\com{5.4} & 55.6\com{4.3}&  21.3\com{1.0} &  33.0\com{1.3}& \textbf{13.9}\com{0.9} \\
\hhline{~---------}
\hhline{~---------}
\hhline{~---------}

&\multirow{4}{*}{\makecell{Language \\
Tasks}} & 20NewsGroups & 34.4\com{3.9} & 35.1\com{2.9} & 31.6\com{1.0} & 28.7\com{0.7} & \textbf{20.1}\com{1.0} & 22.6\com{1.0} & 20.9\com{0.6} \\
\hhline{~~--------}
&& QNLI & 24.8\com{6.0} & 27.4\com{6.3} & 18.2\com{1.4} & 19.2\com{1.3} & 15.5\com{1.4} & 15.0\com{1.7} & \textbf{8.4}\com{2.5} \\
\hhline{~~--------}
&& QPP & 76.7\com{15.6} & 79.6\com{18.7} & 56.5\com{4.7} & 61.7\com{5.1} & 49.1\com{12.4} & 48.7\com{1.3} & \textbf{15.6}\com{1.1} \\
\hhline{~~--------}
& & SST-2 & 127.7\com{36.1} & 114.7\com{32.5} & 74.5\com{4.2} & 69.7\com{4.4} & 41.3\com{5.8}& 43.1\com{8.4} & \textbf{26.8}\com{7.3}\\
\Xhline{1.5pt}

\end{tabular}

\end{small}
} \\
\end{center}
\end{table*}

\begin{figure}
    \begin{center}
\includegraphics[width=.48\textwidth]{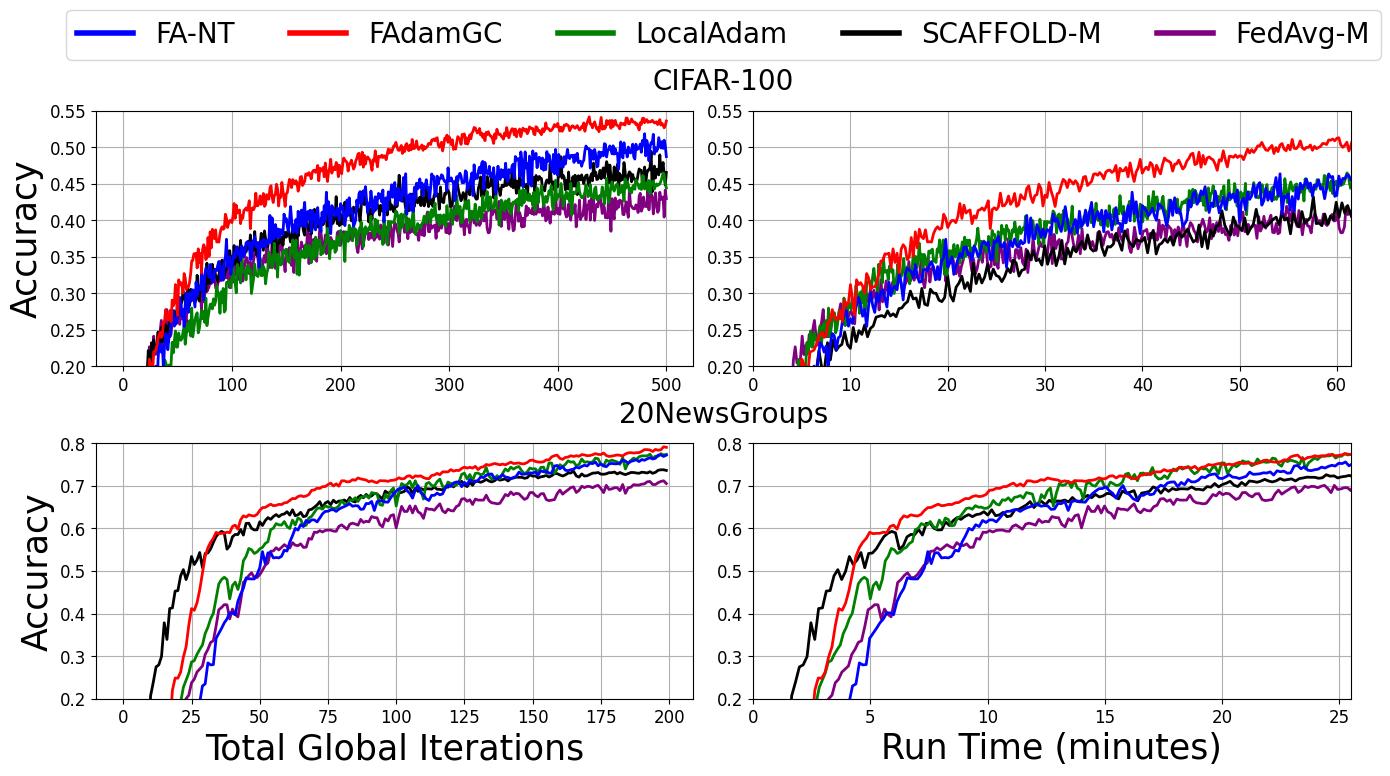}
    \end{center}
    \vspace{-0.1in}
    \caption{\small {Comparison of achieved accuracy over global iterations and run time on CIFAR-100 and 20NewsGroups. {\tt FAdamGC} steadily outperform baselines under different evaluation methods.
    }}
\label{fig:local_iters}
\end{figure}

\textbf{Baseline Comparison on Image Tasks.} 
Table~\ref{table:baseline} compares the total cost required to reach a target accuracy across our proposed methods and several baselines. For our methods, the tracking subset size is set to $\widetilde{S} = S/2$. In our image task experiments, the total run time is largely dominated by communication, which accounts for 10 to 25 more times than local training. This makes communication cost the primary performance bottleneck. Additional comparison of our method under different $\beta_2$ values are shown in Appendix~\ref{appen:beta2_comparison}, showing the effectiveness of second moment estimation.

Under both evaluation methods, {\tt FAdamGC} steadily outperforms the baselines under all datasets.
The superior performance can be attributed to two key factors:
1) the use of adaptive updates via the Adam optimizer, which enables faster, geometry-agnostic local convergence; and
2) the carefully designed gradient correction mechanism. Unlike {\tt FA-NT}, {\tt FAdamGC}'s gradient correction leads to more effective mitigation of data heterogeneity and significantly improved convergence stability.

Figure~\ref{fig:local_iters} further illustrates the convergence trends under both metrics. While {\tt FA-NT} achieves strong performance in terms of total rounds, it incurs higher communication overhead, limiting its practical efficiency. In contrast, {\tt FAdamGC} achieves faster and more stable convergence while maintaining communication efficiency, demonstrating its robustness in heterogeneous federated settings.

\textbf{Baseline Comparison on Language Tasks.}
In these PEFT tasks, the model weights transmitted between the server and each client constitute only 1.9\% of the total parameters stored on the client side. As a result, local training time dominates, being 10 to 30 times longer than the communication time, this indicates that the primary bottleneck in this setting is computational cost.

Table~\ref{table:baseline} presents the performance of our method in language tasks. In these experiments, the size of the sample set used for model aggregation is equal to the sample set for tracking term aggregation, i.e., \( \widetilde{S} = S \). The local epochs between two consecutive global aggregations is set to one.
The results demonstrate that while the improvement introduced by NT is less pronounced compared to its impact in image tasks, GC consistently yields significant enhancements over the baselines. When evaluating the run time, {\tt AdamGC} is able to achieve better results than most algorithms, \textit{emphasizing its ability to capture and leverage first-order information effectively during adaptive optimization}.

\textbf{Impact of Data Heterogeneity.} Figure~\ref{fig:non-iid} illustrates the performance improvement of our algorithm compared to {\tt LocalAdam} under varying levels of non-iid data. We vary the Dirichlet parameter $\alpha$ from 0.1 to 1 to represent levels of non-i.i.d.
When evaluating communication rounds, the gap between {\tt LocalAdam} and {\tt FAdamGC} is more pronounced under high data heterogeneity. 
In contrast, for more i.i.d. settings, the performance gap between {\tt FAdamGC} and {\tt LocalAdam} becomes negligible.
When evaluating the run time, we can see that {\tt FAdamGC} still outperforms {\tt LocalAdam} under high data heterogeneity. We also see that {\tt FAdamGC} mitigates data heterogeneity better than {\tt FA-NT}, this observation aligns with Theorem~\ref{thm:pt_special} and \ref{thm:nt_special}, showing that GC deals with data heterogeneity better than Naive Tracking.

\begin{figure}[t]
\centering
    \includegraphics[width=\linewidth]{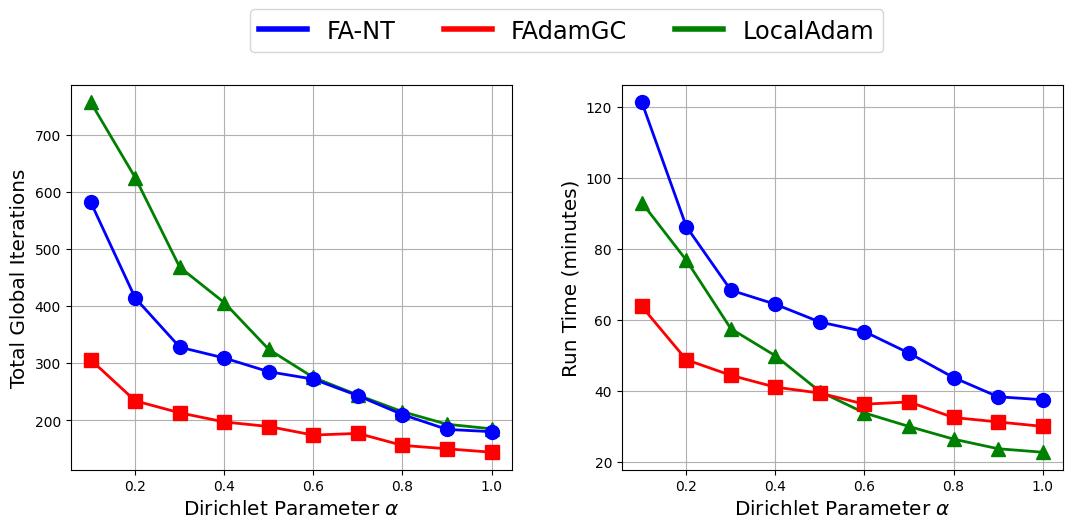}
    \caption{Comparison of the total cost of Adam-based methods under varying Dirichlet parameters on CIFAR-100 to attain $50\%$ accuracy. 
    }
    \label{fig:non-iid}
\end{figure}

\begin{figure}[t]
    \centering
    \includegraphics[width=\linewidth]{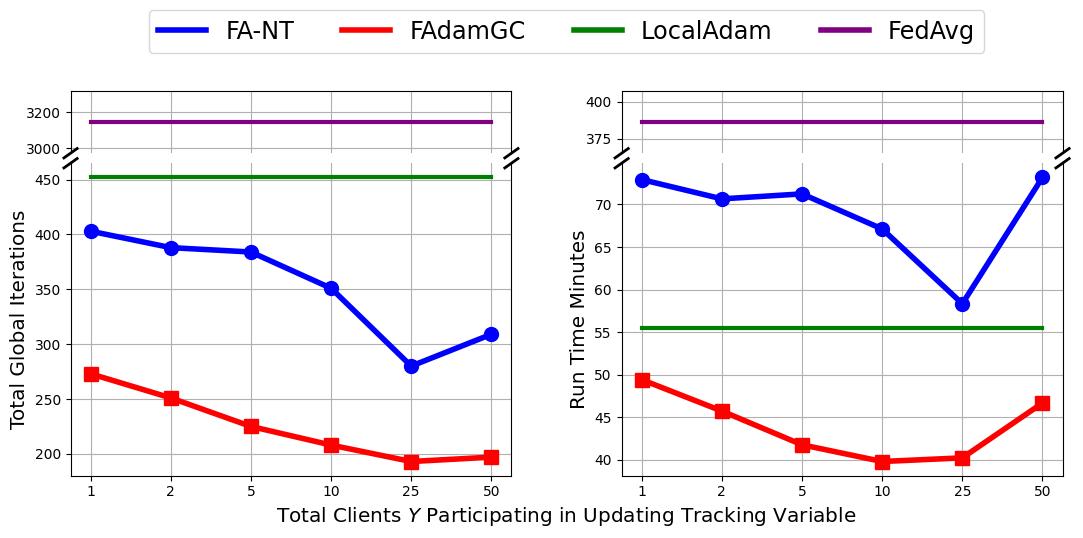}
    \caption{Comparison of cost to attain certain accuracy between different tracking sampling rates on CIFAR-100 with $S = 50$, where the target accuracy is $50\%$. }
    \label{fig:tracking_rate}
\end{figure}

\textbf{Communication Efficiency under Different $\widetilde{S}$.}
Figure~\ref{fig:tracking_rate} evaluates the performance of our proposed methods under different subset sizes \( \Tilde{S} \) used for updating tracking terms. We set the total clients to be $n = 100$. In these set of experiments, we increase the client sample size from $S = 10$ to $S = 50$. This allows a wider range of $\widetilde{S}$ value to compare the difference in terms of communication efficiency. We then compare the communication and computation cost for both algorithms across various $\widetilde{S}$ values ranging from 1 to 50.
The results reveal that, for both {\tt FA-NT} and {\tt FAdamGC}, the total iterations to achieve certain accuracy under increases slowly as the $\widetilde{S}$ value decreases. \textit{This finding demonstrates the possibility to significantly reduce the total number of communications required by the drift compensation process without compromising training performance}. The implication is particularly valuable when communication costs are a major bottleneck. 

These findings are further substantiated by the run time plots in Figure~\ref{fig:tracking_rate}. The plots highlight that \textit{the appropriate $\widetilde{S}$ values not only reduces communication overhead but also maintains superior performance compared to all other configurations and baseline methods}. This advantage underscores the robustness of {\tt FAdamGC}, which effectively balances communication efficiency and convergence. By leveraging a reduced set size \( \widetilde{S} \), {\tt FAdamGC} achieves steady improvements over baselines while preserving its performance.

\section{Conclusion and Limitation}
\label{sec:conclusion}
In this paper, we introduce Gradient Correction, a method to incorporate client-drift compensation into adaptive FL algorithms. By incorporating gradient correction tracking into local adaptive optimizers, we propose a novel algorithms {\tt FAdamGC}. Through rigorous theoretical analysis, we demonstrate that our algorithm achieve linear speedup convergence to a stationary point while showing the naively injecting correction terms into adaptive FL may lead to sub-optimal results with higher dependence on data heterogeneity. Comprehensive numerical evaluations confirm that our method outperform all baselines, delivering superior training performance in heterogeneous data settings. A limitation of our theoretical analysis, as in the existing adaptive federated optimization literature, is the reliance on a bounded gradient assumption for arbitrary Adam parameters. Relaxing this condition remains an important direction for future work toward developing more general convergence guarantees under mild assumptions.

\bibliography{reference}

\begin{thebibliography}{10}
\providecommand{\url}[1]{#1}
\csname url@samestyle\endcsname
\providecommand{\newblock}{\relax}
\providecommand{\bibinfo}[2]{#2}
\providecommand{\BIBentrySTDinterwordspacing}{\spaceskip=0pt\relax}
\providecommand{\BIBentryALTinterwordstretchfactor}{4}
\providecommand{\BIBentryALTinterwordspacing}{\spaceskip=\fontdimen2\font plus
\BIBentryALTinterwordstretchfactor\fontdimen3\font minus \fontdimen4\font\relax}
\providecommand{\BIBforeignlanguage}[2]{{%
\expandafter\ifx\csname l@#1\endcsname\relax
\typeout{** WARNING: IEEEtran.bst: No hyphenation pattern has been}%
\typeout{** loaded for the language `#1'. Using the pattern for}%
\typeout{** the default language instead.}%
\else
\language=\csname l@#1\endcsname
\fi
#2}}
\providecommand{\BIBdecl}{\relax}
\BIBdecl

\bibitem{li2020federated}
T.~Li, A.~K. Sahu, A.~Talwalkar, and V.~Smith, ``Federated learning: Challenges, methods, and future directions,'' \emph{IEEE Signal Processing Magazine}, vol.~37, no.~3, pp. 50--60, 2020.

\bibitem{kairouz2021advances}
P.~Kairouz, H.~B. McMahan, B.~Avent, A.~Bellet, M.~Bennis, A.~N. Bhagoji, K.~Bonawitz, Z.~Charles, G.~Cormode, R.~Cummings \emph{et~al.}, ``Advances and open problems in federated learning,'' \emph{Foundations and Trends{\textregistered} in Machine Learning}, vol.~14, no. 1--2, pp. 1--210, 2021.

\bibitem{duchi2011adaptive}
J.~Duchi, E.~Hazan, and Y.~Singer, ``Adaptive subgradient methods for online learning and stochastic optimization.'' \emph{Journal of machine learning research}, vol.~12, no.~7, 2011.

\bibitem{graves2013generating}
A.~Graves, ``Generating sequences with recurrent neural networks,'' \emph{arXiv preprint arXiv:1308.0850}, 2013.

\bibitem{kingma2014adam}
D.~P. Kingma, ``Adam: A method for stochastic optimization,'' \emph{arXiv preprint arXiv:1412.6980}, 2014.

\bibitem{zhang2024towards}
J.~Zhang, S.~Vahidian, M.~Kuo, C.~Li, R.~Zhang, T.~Yu, G.~Wang, and Y.~Chen, ``Towards building the federatedgpt: Federated instruction tuning,'' in \emph{ICASSP 2024-2024 IEEE International Conference on Acoustics, Speech and Signal Processing (ICASSP)}.\hskip 1em plus 0.5em minus 0.4em\relax IEEE, 2024, pp. 6915--6919.

\bibitem{cheng2023momentum}
Z.~Cheng, X.~Huang, P.~Wu, and K.~Yuan, ``Momentum benefits non-iid federated learning simply and provably,'' \emph{arXiv preprint arXiv:2306.16504}, 2023.

\bibitem{reddi2020adaptive}
S.~Reddi, Z.~Charles, M.~Zaheer, Z.~Garrett, K.~Rush, J.~Kone{\v{c}}n{\`y}, S.~Kumar, and H.~B. McMahan, ``Adaptive federated optimization,'' \emph{arXiv preprint arXiv:2003.00295}, 2020.

\bibitem{wang2022communication}
Y.~Wang, L.~Lin, and J.~Chen, ``Communication-efficient adaptive federated learning,'' in \emph{International conference on machine learning}.\hskip 1em plus 0.5em minus 0.4em\relax PMLR, 2022, pp. 22\,802--22\,838.

\bibitem{karimireddy2020scaffold}
S.~P. Karimireddy, S.~Kale, M.~Mohri, S.~Reddi, S.~Stich, and A.~T. Suresh, ``Scaffold: Stochastic controlled averaging for federated learning,'' in \emph{International conference on machine learning}.\hskip 1em plus 0.5em minus 0.4em\relax PMLR, 2020, pp. 5132--5143.

\bibitem{mishchenko2022proxskip}
K.~Mishchenko, G.~Malinovsky, S.~Stich, and P.~Richt{\'a}rik, ``Proxskip: Yes! local gradient steps provably lead to communication acceleration! finally!'' in \emph{International Conference on Machine Learning}.\hskip 1em plus 0.5em minus 0.4em\relax PMLR, 2022, pp. 15\,750--15\,769.

\bibitem{di2016next}
P.~Di~Lorenzo and G.~Scutari, ``Next: In-network nonconvex optimization,'' \emph{IEEE Transactions on Signal and Information Processing over Networks}, vol.~2, no.~2, pp. 120--136, 2016.

\bibitem{nedic2017achieving}
A.~Nedic, A.~Olshevsky, and W.~Shi, ``Achieving geometric convergence for distributed optimization over time-varying graphs,'' \emph{SIAM Journal on Optimization}, vol.~27, no.~4, pp. 2597--2633, 2017.

\bibitem{tian2018asy}
Y.~Tian, Y.~Sun, and G.~Scutari, ``Asy-sonata: Achieving linear convergence in distributed asynchronous multiagent optimization,'' in \emph{2018 56th Annual Allerton Conference on Communication, Control, and Computing (Allerton)}.\hskip 1em plus 0.5em minus 0.4em\relax IEEE, 2018, pp. 543--551.

\bibitem{koloskova2021improved}
A.~Koloskova, T.~Lin, and S.~U. Stich, ``An improved analysis of gradient tracking for decentralized machine learning,'' \emph{Neural Information Processing Systems}, vol.~34, pp. 11\,422--11\,435, 2021.

\bibitem{carnevale2022gtadam}
G.~Carnevale, F.~Farina, I.~Notarnicola, and G.~Notarstefano, ``Gtadam: Gradient tracking with adaptive momentum for distributed online optimization,'' \emph{IEEE Transactions on Control of Network Systems}, vol.~10, no.~3, pp. 1436--1448, 2022.

\bibitem{takezawa2022momentum}
Y.~Takezawa, H.~Bao, K.~Niwa, R.~Sato, and M.~Yamada, ``Momentum tracking: Momentum acceleration for decentralized deep learning on heterogeneous data,'' \emph{arXiv preprint arXiv:2209.15505}, 2022.

\bibitem{wang2024momentum}
Z.~Wang, D.~Wang, J.~Lian, H.~Ge, and W.~Wang, ``Momentum-based distributed gradient tracking algorithms for distributed aggregative optimization over unbalanced directed graphs,'' \emph{Automatica}, vol. 164, p. 111596, 2024.

\bibitem{patel2022towards}
K.~K. Patel, L.~Wang, B.~E. Woodworth, B.~Bullins, and N.~Srebro, ``Towards optimal communication complexity in distributed non-convex optimization,'' \emph{Advances in Neural Information Processing Systems}, vol.~35, pp. 13\,316--13\,328, 2022.

\bibitem{liu2023decentralized}
Y.~Liu, T.~Lin, A.~Koloskova, and S.~U. Stich, ``Decentralized gradient tracking with local steps,'' \emph{arXiv preprint arXiv:2301.01313}, 2023.

\bibitem{ge2023gradient}
S.~Ge and T.-H. Chang, ``Gradient and variable tracking with multiple local {SGD} for decentralized non-convex learning,'' \emph{arXiv preprint arXiv:2302.01537}, 2023.

\bibitem{berahas2023balancing}
A.~S. Berahas, R.~Bollapragada, and S.~Gupta, ``Balancing communication and computation in gradient tracking algorithms for decentralized optimization,'' \emph{arXiv preprint arXiv:2303.14289}, 2023.

\bibitem{alghunaim2024local}
S.~A. Alghunaim, ``Local exact-diffusion for decentralized optimization and learning,'' \emph{IEEE Transactions on Automatic Control}, 2024.

\bibitem{chen2024taming}
E.~Chen, S.~Wang, and C.~G. Brinton, ``Taming subnet-drift in d2d-enabled fog learning: A hierarchical gradient tracking approach,'' in \emph{IEEE INFOCOM 2024-IEEE Conference on Computer Communications}.\hskip 1em plus 0.5em minus 0.4em\relax IEEE, 2024, pp. 2438--2447.

\bibitem{fang2024hierarchical}
W.~Fang, D.-J. Han, E.~Chen, S.~Wang, and C.~Brinton, ``Hierarchical federated learning with multi-timescale gradient correction,'' in \emph{The Thirty-eighth Annual Conference on Neural Information Processing Systems}, 2024.

\bibitem{loshchilov2017decoupled}
I.~Loshchilov, ``Decoupled weight decay regularization,'' \emph{arXiv preprint arXiv:1711.05101}, 2017.

\bibitem{xie2024overlooked}
Z.~Xie, Z.~Xu, J.~Zhang, I.~Sato, and M.~Sugiyama, ``On the overlooked pitfalls of weight decay and how to mitigate them: A gradient-norm perspective,'' \emph{Advances in Neural Information Processing Systems}, vol.~36, 2024.

\bibitem{xie2019local}
C.~Xie, O.~Koyejo, I.~Gupta, and H.~Lin, ``Local adaalter: Communication-efficient stochastic gradient descent with adaptive learning rates,'' \emph{arXiv preprint arXiv:1911.09030}, 2019.

\bibitem{sun2023efficient}
Y.~Sun, L.~Shen, H.~Sun, L.~Ding, and D.~Tao, ``Efficient federated learning via local adaptive amended optimizer with linear speedup,'' \emph{arXiv preprint arXiv:2308.00522}, 2023.

\bibitem{zou2019sufficient}
F.~Zou, L.~Shen, Z.~Jie, W.~Zhang, and W.~Liu, ``A sufficient condition for convergences of adam and rmsprop,'' in \emph{Proceedings of the IEEE/CVF Conference on computer vision and pattern recognition}, 2019, pp. 11\,127--11\,135.

\bibitem{krizhevsky2009learning}
A.~Krizhevsky, G.~Hinton \emph{et~al.}, ``Learning multiple layers of features from tiny images,'' \emph{Master's thesis, University of Tront}, 2009.

\bibitem{le2015tiny}
Y.~Le and X.~Yang, ``Tiny imagenet visual recognition challenge,'' \emph{CS 231N}, vol.~7, no.~7, p.~3, 2015.

\bibitem{hu2022lora}
E.~J. Hu, Y.~Shen, P.~Wallis, Z.~Allen-Zhu, Y.~Li, S.~Wang, L.~Wang, W.~Chen \emph{et~al.}, ``Lora: Low-rank adaptation of large language models.'' \emph{ICLR}, vol.~1, no.~2, p.~3, 2022.

\bibitem{radford2019language}
A.~Radford, J.~Wu, R.~Child, D.~Luan, D.~Amodei, I.~Sutskever \emph{et~al.}, ``Language models are unsupervised multitask learners,'' \emph{OpenAI blog}, vol.~1, no.~8, p.~9, 2019.

\bibitem{lang1995newsweeder}
K.~Lang, ``Newsweeder: Learning to filter netnews,'' in \emph{Machine learning proceedings 1995}.\hskip 1em plus 0.5em minus 0.4em\relax Elsevier, 1995, pp. 331--339.

\bibitem{wang2018glue}
A.~Wang, ``Glue: A multi-task benchmark and analysis platform for natural language understanding,'' \emph{arXiv preprint arXiv:1804.07461}, 2018.

\end{thebibliography}
\bibliographystyle{IEEEtran}

\newpage
\appendix
\onecolumn

\begin{center}
    {\bf\Large Appendix}
\end{center}

\startcontents[sections]
\printcontents[sections]{l}{1}{\setcounter{tocdepth}{3}}

\newpage

\section{Theoretical Analysis for FAdamGC (Theorem \ref{thm:FAdamGC})}
\label{appen:fadamgt}
We first define the expected first order moment $\Tilde{m}_i^{(t,k)}$ as the following:
\begin{align}
    \Tilde{m}_i^{(t,k)} &\overset{\Delta}{=} \sum_{k'=1}^k c^{(k,k')} \left(\nabla f_i(x_i^{(t,k)}) - \nabla f_i(\gamma_i^{(t,k)}) + \frac{1}{n}\sum_{i=1}^n\nabla f_i(\gamma_i^{(t,k)})\right)
\end{align}
Where $\gamma_i^{(t,k)}$ is an auxilary variable that tracks the GT terms:
\begin{equation}
    \gamma_i^{(t,k)} = \begin{cases}
        x_i^{(t-1,k)} \quad i \in \mathcal{Y}^{t-1}\\
        \gamma_i^{(t-1,k)} \quad i \notin \mathcal{Y}^{t-1}
    \end{cases}
    \label{def:gamma_values}
\end{equation}
We further define the local deviation term $\Xi^{(t)}$ as:
\begin{equation}
    \Xi^{(t)} = \frac{1}{n}\sum_{i=1}^n\sum_{k=1}^K \mathbb{E}\|\sum_{k'=1}^k c^{(k,k')} \nabla f_i(x_i^{(t,k')}) - c^k\nabla f_i(x^{(t)})\|^2
\end{equation}
\begin{proof}
    Given global iteration $t$, the update of the model at the server can be written as:
\begin{align}
    x^{(t+1)} &= x^{(t)} + \eta_g \frac{1}{S}\sum_{i\in \mathcal{S}^{(t)}} (x_i^{(t,K+1)} - x^{(t)})\\
    &=x^{(t)} - \eta_g\eta_l \frac{1}{S}\sum_{i\in \mathcal{S}^{(t)}} \sum_{k=1}^K \frac{m_i^{(t,k)}}{\sqrt{\hat{v}_i^{(t,k)}}+\epsilon}
\end{align}
By injecting Assumption~\ref{assump:genLoss}, we can get the following inequality:

\begin{align}
    \mathbb{E} f(x^{(t+1)}) \leq& \mathbb{E} f(x^{(t)}) - \underbrace{\eta_g\eta_l \mathbb{E}\left\langle\nabla f (x^{(t)}), \frac{1}{S}\sum_{i\in \mathcal{S}^{(t)}} \sum_{k=1}^K \frac{m_i^{(t,k)}}{\sqrt{\hat{v}_i^{(t,k)}}+\epsilon}\right\rangle}_\text{Term I}\\
    &+\underbrace{\eta_g^2\eta_l^2\frac{L}{2}\mathbb{E}\left\|\frac{1}{S}\sum_{i\in \mathcal{S}^{(t)}} \sum_{k=1}^K \frac{m_i^{(t,k)}}{\sqrt{\hat{v}_i^{(t,k)}}+\epsilon}\right\|^2}_\text{Term II}
\end{align}
For term I, we first define the average of all square root second moment:
\begin{equation}
    \Bar{v}^{(t)} = \frac{1}{n}\sum_{i=1}^n \sqrt{v_i^{(t)}}
\end{equation}
Term I can be upper bounded as:
\begin{align}
    &-\eta_g\eta_l \mathbb{E}\left\langle\nabla f (x^{(t)}), \frac{1}{S}\sum_{i\in \mathcal{S}^{(t)}} \sum_{k=1}^K \frac{m_i^{(t,k)}}{\sqrt{\hat{v}_i^{(t,k)}}+\epsilon}\right\rangle\\
    &=-\eta_g\eta_l \mathbb{E}\left\langle\nabla f (x^{(t)}), \frac{1}{S}\sum_{i\in \mathcal{S}^{(t)}} \sum_{k=1}^K \frac{\Tilde{m}_i^{(t,k)}}{\sqrt{\hat{v}_i^{(t,k)}}+\epsilon}\right\rangle\\
    &=-\eta_g\eta_l \mathbb{E}\left\langle\nabla f (x^{(t)}), \frac{1}{n}\sum_{i=1}^n \sum_{k=1}^K \frac{\sum_{k'=1}^k c^{(k,k')} \nabla f_i(x_i^{(t,k')})}{\sqrt{\hat{v}_i^{(t,k)}}+\epsilon}\right\rangle\\
    &-\textstyle\eta_g\eta_l \mathbb{E}\left\langle\nabla f (x^{(t)}), \frac{1}{n}\sum_{i,k}\frac{\sum_{k'=1}^k c^{(k,k')} \left(\frac{1}{K}\sum_{k''=1}^K \left(\frac{1}{n}\sum_{i'=1}^n\nabla f_{i'}(\gamma_{i'}^{(t,k'')})- \nabla f_i(\gamma_i^{(t,k'')})\right)\right)}{\sqrt{\hat{v}_i^{(t,k)}}+\epsilon}\right\rangle
    \end{align}
    Using the fact that $\sum_{i=1}^n \left(\nabla f_i(\gamma_i^{(t,k)}) - \frac{1}{n}\sum_{i'=1}^n\nabla f_i'(\gamma_i'^{(t,k)})\right) = 0$, we can show that:
    \begin{align}
    &-\eta_g\eta_l \mathbb{E}\left\langle\nabla f (x^{(t)}), \frac{1}{S}\sum_{i\in \mathcal{S}^{(t)}} \sum_{k=1}^K \frac{m_i^{(t,k)}}{\sqrt{\hat{v}_i^{(t,k)}}+\epsilon}\right\rangle\\
    &= -\eta_g\eta_l  \mathbb{E}\left\langle\nabla f (x^{(t)}), \frac{1}{n}\sum_{i=1}^n \sum_{k=1}^K \left(\frac{\sum_{k'=1}^k c^{(k,k')} \nabla f_i(x_i^{(t,k')}) }{\sqrt{\hat{v}_i^{(t,k)}}+\epsilon} - \frac{\sum_{k'=1}^k c^{(k,k')} \nabla f_i(x_i^{(t,k')})  }{{\Bar{v}^{(t)}}+\epsilon}\right)\right\rangle\\
    &-\eta_g\eta_l  \mathbb{E}\left\langle\nabla f (x^{(t)}), \frac{1}{n}\sum_{i=1}^n \sum_{k=1}^K \left( \frac{\sum_{k'=1}^k c^{(k,k')} \nabla f_i(x_i^{(t,k')})}{{\Bar{v}^{(t)}}+\epsilon} - \frac{c^k\nabla f_i(x^{(t)})}{{\Bar{v}^{(t)}}+\epsilon} + \frac{c^k\nabla f_i(x^{(t)})}{{\Bar{v}^{(t)}}+\epsilon}\right)\right\rangle\\
    &\leq -\eta_g\eta_l K\frac{(1-\beta_1)\beta_1}{G+\epsilon}  \mathbb{E}\|\nabla f(x^{(t)}\|^2 \\
    & -\eta_g\eta_l  \mathbb{E}\left\langle\nabla f (x^{(t)}), \frac{1}{n}\sum_{i=1}^n \sum_{k=1}^K \left(\frac{\sum_{k'=1}^k c^{(k,k')} \nabla f_i(x_i^{(t,k')}) }{\sqrt{\hat{v}_i^{(t,k)}}+\epsilon} - \frac{\sum_{k'=1}^k c^{(k,k')} \nabla f_i(x_i^{(t,k')})  }{{\Bar{v}^{(t)}}+\epsilon}\right)\right\rangle\\
    &-\eta_g\eta_l  \mathbb{E}\left\langle\nabla f (x^{(t)}), \frac{1}{n}\sum_{i=1}^n \sum_{k=1}^K \left( \frac{\sum_{k'=1}^k c^{(k,k')} \nabla f_i(x_i^{(t,k')})}{{\Bar{v}^{(t)}}+\epsilon} - \frac{c^k\nabla f_i(x^{(t)})}{{\Bar{v}^{(t)}}+\epsilon} \right)\right\rangle\\
    &\leq -\frac{\eta_g\eta_l K}{2}\frac{(1-\beta_1)\beta_1}{G+\epsilon}  \mathbb{E}\|\nabla f(x^{(t)}\|^2 \\
    &+ \eta_g\eta_l \frac{G+\epsilon}{(1-\beta_1)\beta_1\epsilon^2} \frac{1}{n}\sum_{i=1}^n\sum_{k=1}^K\mathbb{E}\|\sum_{k'=1}^k c^{(k,k')} \nabla f_i(x_i^{(t,k')}) - c^k\nabla f_i(x^{(t)})\|^2 \\
    & + \eta_g\eta_lK \frac{G+\epsilon}{(1-\beta_1)\beta_1} \mathbb{E}\|\frac{1}{nK}\sum_{i=1}^n\sum_{k=1}^K\frac{\sum_{k'=1}^k c^{(k,k')} \nabla f_i(x_i^{(t,k')})}{\sqrt{\hat{v}_i^{(t,k)}}+\epsilon} - \frac{\sum_{k'=1}^k c^{(k,k')} \nabla f_i(x_i^{(t,k')})}{{\Bar{v}^{(t)}}+\epsilon}\|^2 \\
    &\leq -\frac{\eta_g\eta_l K}{2}\frac{(1-\beta_1)\beta_1}{G+\epsilon}  \mathbb{E}\|\nabla f(x^{(t)}\|^2 \\&+ \eta_g\eta_l \frac{G+\epsilon}{(1-\beta_1)\beta_1\epsilon^2} \frac{1}{n}\sum_{i=1}^n\sum_{k=1}^K\mathbb{E}\|\sum_{k'=1}^k c^{(k,k')} \nabla f_i(x_i^{(t,k')}) - c^k\nabla f_i(x^{(t)})\|^2 \\
    & + \eta_g\eta_lK \frac{G^2(G+\epsilon)}{(1-\beta_1)\beta_1} \mathbb{E}\|\frac{1}{nK}\sum_{i=1}^n\sum_{k=1}^K\frac{\sqrt{\hat{v}_i^{(t,k)}} - \Bar{v}^{(t)}}{(\sqrt{\hat{v}_i^{(t,k)}}+\epsilon)(\Bar{v}^{(t)}+\epsilon)}\|^2 \\
    &\leq -\frac{\eta_g\eta_l K}{2}\frac{(1-\beta_1)\beta_1}{G+\epsilon}  \mathbb{E}\|\nabla f(x^{(t)}\|^2 \\&+ \eta_g\eta_l \frac{G+\epsilon}{(1-\beta_1)\beta_1\epsilon^2} \frac{1}{n}\sum_{i=1}^n\sum_{k=1}^K\mathbb{E}\|\sum_{k'=1}^k c^{(k,k')} \nabla f_i(x_i^{(t,k')}) - c^k\nabla f_i(x^{(t)})\|^2 \\
    & + \eta_g\eta_lK \frac{G^2(G+\epsilon)}{(1-\beta_1)\beta_1 \epsilon^2} \mathbb{E}\|\Bar{v}^{(t+1)} - \Bar{v}^{(t)}\|^2 
\end{align}
Term II can be upper bounded as:
\begin{align}
    &\frac{\eta_g^2\eta_l^2L}{2}\mathbb{E}\left\|\frac{1}{S}\sum_{i\in \mathcal{S}^{(t)}} \sum_{k=1}^K \frac{m_i^{(t,k)}}{\sqrt{\hat{v}_i^{(t,k)}}+\epsilon}\right\|^2 \\
    &\leq 2\eta_g^2\eta_l^2K^2L\mathbb{E}\|\nabla f(x^{(t)}\|^2 + \frac{4\eta_g^2\eta_l^2KL}{\epsilon^2} \frac{1}{n}\sum_{i=1}^n\sum_{k=1}^K \|\sum_{k'=1}^k c^{(k,k')} \nabla f_i(x_i^{(t,k')}) - c^k\nabla f_i(x^{(t)})\|^2 \\
    &+ \frac{4\eta_g^2\eta_l^2 (1-\epsilon)^2}{\epsilon^2} K^2L G^2 + \eta_g^2\eta_l^2KL\sigma^2
\end{align}
If we choose $\eta_g\eta_l \leq \frac{(1-\beta_1)\beta_1}{8KL(G+\epsilon)}$, we can combine Term I and II and get:
\begin{align}
    \mathbb{E} f(x^{(t+1)}) \leq& \mathbb{E} f(x^{(t)}) -\frac{\eta_g\eta_l K}{4}\frac{(1-\beta_1)\beta_1}{G+\epsilon}  \mathbb{E}\|\nabla f(x^{(t)}\|^2 +  \frac{2\eta_g\eta_l}{\epsilon^2} \frac{G+\epsilon}{(1-\beta_1)\beta_1} \Xi^{(t)} \\
    &+ \eta_g\eta_lK \frac{G^2(G+\epsilon)}{(1-\beta_1)\beta_1 \epsilon^2} \mathbb{E}\|\Bar{v}^{(t+1)} - \Bar{v}^{(t)}\|^2 + \frac{2\eta_g^2\eta_l^2 (1-\epsilon)^2}{\epsilon^2} K^2L G^2 + \frac{\eta_g^2\eta_l^2KL\sigma^2}{n}
\end{align}
By using Lemma \ref{lem:deviation2}, we can bound $\Xi^{(t)}$ and get:
\begin{align}
    \mathbb{E} f(x^{(t+1)}) \leq& \mathbb{E} f(x^{(t)}) -\frac{\eta_g\eta_l K}{4}\frac{(1-\beta_1)\beta_1}{G+\epsilon}  \mathbb{E}\|\nabla f(x^{(t)}\|^2 \\
    &+  \frac{2\eta_g\eta_l}{\epsilon^2} \frac{G+\epsilon}{(1-\beta_1)\beta_1} \frac{6\eta_l^2K^2L^2}{\epsilon^2}\left(6(1-\beta_2)G^6 + 8(1-\beta_1)(\sigma^2 + G^2)\right) \\
    &+ \eta_g\eta_lK \frac{G^2(G+\epsilon)}{(1-\beta_1)\beta_1 \epsilon^2} \mathbb{E}\|\Bar{v}^{(t+1)} - \Bar{v}^{(t)}\|^2 + \frac{2\eta_g^2\eta_l^2 (1-\epsilon)^2}{\epsilon^2} K^2L G^2 + \frac{\eta_g^2\eta_l^2KL\sigma^2}{n}
\end{align}
We can reorganize the inequality and get:
\begin{align}
&\frac{\eta_g\eta_l K}{4}\frac{(1-\beta_1)\beta_1}{G+\epsilon}  \mathbb{E}\|\nabla f(x^{(t)}\|^2 \\
&\leq \mathbb{E} f(x^{(t)}) - \mathbb{E} f(x^{(t+1)})\\
&+  \frac{2\eta_g\eta_l}{\epsilon^2} \frac{G+\epsilon}{(1-\beta_1)\beta_1} \frac{6\eta_l^2K^2L^2}{\epsilon^2}\left(6(1-\beta_2)G^6 + 8(1-\beta_1)(\sigma^2 + G^2)\right) \\
    &+ \eta_g\eta_lK \frac{G^2(G+\epsilon)}{(1-\beta_1)\beta_1 \epsilon^2} \mathbb{E}\|\Bar{v}^{(t+1)} - \Bar{v}^{(t)}\|^2 \\
    &+ \frac{2\eta_g^2\eta_l^2 (1-\epsilon)^2}{\epsilon^2} K^2L G^2 + \frac{\eta_g^2\eta_l^2KL\sigma^2}{n}
\end{align}
By summing up all global iterations $T$ and dividing both sides with constants, we get:
\begin{align}
    \frac{1}{T} \sum_{i=1}^T\mathbb{E}\|\nabla f(x^{(t)}\|^2 \leq & \frac{4(G+\epsilon)(\mathbb{E} f(x^{(1)}) - \mathbb{E} f(x^{(T+1)}))}{\eta_l\eta_g K (1-\beta_1)\beta_1 T}\\
    &+ \frac{(G+\epsilon)^2}{(1-\beta_1)^2\beta_1^2} \frac{6\eta_l^2KL^2}{\epsilon^4}\left(6(1-\beta_2)G^6 + 8(1-\beta_1)(\sigma^2 + G^2) + G^2\right)\\
    &+ \frac{G^4(G+\epsilon)^2}{(1-\beta_1)^2\beta_1^2 \epsilon^2 T} \\
    &+ \frac{8\eta_g\eta_l (G+\epsilon)}{(1-\beta_1)\beta_1\epsilon^2}L (KG^2 + \sigma^2)\\
\end{align}

Finally, by defining $\mathcal{F} = \mathbb{E} f(x^{(1)}) - f^*$ and bounding $\eta_l \leq \frac{(1-\beta_1)\beta_1\epsilon}{40(G+\epsilon)\sqrt{T}L}$, $\eta_g\eta_l \leq \frac{(1-\beta_1)\beta_1}{12(G+\epsilon)TL}$, and a specific step size
\begin{equation}
    \eta_g\eta_l = \min(\frac{(1-\beta_1)\beta_1}{8(G+\epsilon)KL},  \frac{(1-\beta_1)\beta_1}{12(G+\epsilon)TL}, \frac{(G+\epsilon)\sqrt{n}}{(1-\beta_1)\beta_1\sigma\sqrt{TKL}})
\end{equation}
, we can get:
\begin{align}
\frac{1}{T} \sum_{i=1}^T\mathbb{E}\|\nabla f(x^{(t)}\|^2 \leq & \frac{L\mathcal{F}}{T}\\
    &+2\sqrt{\frac{L\mathcal{F}\sigma^2}{nKT}}\\
    &+ \left(\frac{2}{(1-\beta_1)^2\beta_1^2} + K(1-\beta_2)\right)\frac{G^6}{\epsilon^2 T}\\
    &+K(1-\beta_1)\frac{\sigma^2  + G^2}{\epsilon^2 T}\\
    &= \mathcal{O}\left(\sqrt{\frac{L\mathcal{F}\sigma^2}{nKT}} + \frac{L\mathcal{F}}{T} +\frac{KG^6}{\epsilon^2T}+\frac{K(\sigma^2+(1+\epsilon^2)G^2)}{\epsilon^2T}\right) 
\end{align}

\end{proof}

\begin{lemma}
\label{lem:deviation2}   
Under Assumption~\ref{assump:genLoss}, the local deviation term $\Xi^{(t)}$ can be bounded as the following:
\begin{equation}
\Xi^{(t)} \leq \frac{6\eta_l^2K^2L^2}{\epsilon^2}\left(6(1-\beta_2)G^6 + 8(1-\beta_1)(\sigma^2 + G^2)\right)
\end{equation}
\end{lemma}
\begin{proof}
    We first define the unbiased version of $m_i^{(t,k)}$ taking expectation on all stochastic gradients $g_i^{(t,k)}$. 
Then, we can bound the deviation term as:
\begin{align}
    \Xi^{(t)} = &\frac{1}{n}\sum_{i=1}^n\sum_{k=1}^K \mathbb{E}\|\sum_{k'=1}^k c^{(k,k')} \nabla f_i(x_i^{(t,k')}) - c^k\nabla f_i(x^{(t)})\|^2 \\
    &\leq  \frac{L^2}{n}\sum_{k=1}^K\underbrace{\sum_{i=1}^n \sum_{k'=1}^k c^{(k,k')}\mathbb{E}\| x_i^{(t,k')}- x^{(t)}\|^2}_\text{$e^{t,k}$}
\end{align}

We can further bound $e^{t,k}$, for some nonnegative constant $a$, we can get:
\begin{align}
    &\sum_{i=1}^n \sum_{k'=1}^k c^{(k,k')}\mathbb{E}\| x_i^{(t,k')}- x^{(t)}\|^2 \\
    &= \sum_{i=1}^n \sum_{k'=1}^k c^{(k,k')}\mathbb{E}\left\| x_i^{(t,k'-1)} - \eta_l \frac{m_i^{t,k'-1}}{\sqrt{\hat{v}_i^{t,k'-1}}+\epsilon}- x^{(t)}\right\|^2\\
    &\leq (1+a)\sum_{i=1}^n \sum_{k'=1}^k c^{(k,k')}\mathbb{E}\left\| x_i^{(t,k'-1)}- x^{(t)}\right\|^2 + (1+\frac{1}{a})\eta_l^2 \sum_{i=1}^n \sum_{k'=1}^k c^{(k,k')}\mathbb{E}\left\| \frac{m_i^{t,k'-1}}{\sqrt{\hat{v}_i^{t,k'-1}}+\epsilon}\right\|^2
\end{align}

From \eqref{eq:c_const_def}, we can show that:
\begin{equation}
    c^{(k,k')} = (1-\beta_1)\beta_1^{k-k'} = c^{(k-1,k'-1)}
\end{equation}
Thus, we can further bound the terms as:
\begin{align}
    &\sum_{i=1}^n \sum_{k'=1}^k c^{(k,k')}\mathbb{E}\| x_i^{(t,k')}- x^{(t)}\|^2 \\
    & \leq (1+a)\sum_{i=1}^n \sum_{k'=0}^{k-1} c^{(k-1,k')}\mathbb{E}\| x_i^{(t,k')}- x^{(t)}\|^2 + (1+\frac{1}{a})\eta_l^2 \sum_{i=1}^n \sum_{k'=0}^{k-1} c^{(k-1,k')}\mathbb{E}\left\| \frac{m_i^{t,k'}}{\sqrt{\hat{v}_i^{t,k'}}+\epsilon}\right\|^2\\
    &=(1+a)\sum_{i=1}^n \sum_{k'=1}^{k-1} c^{(k-1,k')}\mathbb{E}\| x_i^{(t,k')}- x^{(t)}\|^2 + (1+\frac{1}{a})\eta_l^2 \sum_{i=1}^n \sum_{k'=1}^{k-1} c^{(k-1,k')}\mathbb{E}\left\| \frac{m_i^{t,k'}}{\sqrt{\hat{v}_i^{t,k'}}+\epsilon}\right\|^2\\
    &= (1+a)e^{t,k-1} + (1+\frac{1}{a})\eta_l^2 \underbrace{\sum_{i=1}^n \sum_{k'=1}^{k-1} c^{(k-1,k')}\mathbb{E}\left\| \frac{m_i^{t,k'}}{\sqrt{\hat{v}_i^{t,k'}}+\epsilon}\right\|^2}_{s^{t,k-1}}
\end{align}
We then bound $s^{t,k}$:

\begin{align}
    &\sum_{i=1}^n \sum_{k'=1}^{k} c^{(k,k')}\mathbb{E}\left\| \frac{m_i^{t,k'}}{\sqrt{\hat{v}_i^{t,k'}}+\epsilon}\right\|^2 \\
    &\leq 2 \sum_{i=1}^n \sum_{k'=1}^{k} c^{(k,k')}\mathbb{E}\left\| \frac{m_i^{t,k'}}{\sqrt{\hat{v}_i^{t,k'}}+\epsilon} - \frac{m_i^{t,k'}}{\sqrt{\beta_2\hat{v}_i^{t,k'-1}}+\epsilon}\right\|^2 + 2\sum_{i=1}^n \sum_{k'=1}^{k} c^{(k,k')}\mathbb{E}\left\| \frac{m_i^{t,k'}}{\sqrt{\beta_2\hat{v}_i^{t,k'-1}}+\epsilon}\right\|^2\\
    &\leq 2G^2 \sum_{i=1}^n \sum_{k'=1}^{k} c^{(k,k')}\mathbb{E}\left\| \frac{1}{\sqrt{\hat{v}_i^{t,k'}}+\epsilon} - \frac{1}{\sqrt{\beta_2\hat{v}_i^{t,k'-1}}+\epsilon}\right\|^2\\
    &+ 2\sum_{i=1}^n \sum_{k'=1}^{k} c^{(k,k')}\mathbb{E}\left\| \frac{\beta_1m_i^{t,k'-1} + (1-\beta_1)\hat{g}_i^{t,k'}}{\sqrt{\beta_2\hat{v}_i^{t,k'-1}}+\epsilon}\right\|^2 \\
    &\leq 2(1-\beta_2)G^6 \frac{n}{\epsilon^2}+ 2\beta_1 \sum_{i=1}^n \sum_{k'=1}^{k-1} c^{(k-1,k')}\mathbb{E}\left\| \frac{m_i^{t,k'}}{\sqrt{\beta_2\hat{v}_i^{t,k'}}+\epsilon}\right\|^2\\
    &+
    \frac{2(1-\beta_1)}{\epsilon^2} \sum_{k'=1}^{k} c^{(k,k')}\mathbb{E}\left\| \hat{g}_i^{t,k'} - \nabla f_i(x_i^{t,k'}) + \nabla f_i(x_i^{t,k'}) - \nabla f_i(x^{t}) + \nabla f_i(x^{t}) \right  \|^2\\
    &\leq \frac{8L^2}{\epsilon^2}(1-\beta_1)e^{t,k} + \underbrace{\left(6(1-\beta_2)G^6 + 8(1-\beta_1)(\sigma^2 +  G^2)\right)\frac{n}{\epsilon^2}}_{C_1}
\end{align}
We thus get the recursive relationship between $e^{t,k}$ and $s^{t,k}$:
\begin{equation}
    \begin{cases}
        e^{t,k} &\leq (1+a)e^{t,k-1} + (1+\frac{1}{a})\eta_l^2 s^{t,k-1}\\
        s^{t,k} &\leq  \frac{8L^2(1-\beta_1)}{\epsilon^2} e^{t,k} + C_1
    \end{cases}
\end{equation}

If we restrict the choice of the momentum term with $(1-\beta_1) < \frac{\epsilon^2}{16KL^2}$ and let $\eta_l \leq \frac{\epsilon}{4\sqrt{(1-\beta_1)}KL}$, we can let $a = 1$ and get:
\begin{equation}
    e^{t,k} \leq (1 + \frac{1}{K-1})e^{t,k-1} 2\eta_l^2 C_1
\end{equation}
By unrolling the recursion, we get:
\begin{align}
    e^{t,k} &\leq \sum_{k'=1}^k k' (1 + \frac{1}{K-1})^{k'} 2\eta_l^2C_1\leq 6K\eta_l^2 C_1 \label{eq: unroll_e_recursion}
\end{align}
Finally, plug \eqref{eq: unroll_e_recursion} back to the definition of $\Xi^t$ and we get:
\begin{align}
        \Xi^{(t)} 
    &\leq  \frac{L^2}{n}\sum_{k=1}^Ke^{t,k}\\
    &\leq \frac{6\eta_l^2K^2L^2}{\epsilon^2}\left(6(1-\beta_2)G^6 + 8(1-\beta_1)(\sigma^2 + G^2)\right)
\end{align}

\end{proof}
\newpage

\section{Analsis of FAdamGC for Special Cases (Theorem.~\ref{thm:pt_special})}
\label{appen:GT_special}

Similar to Appendix~\ref{appen:proof_for_nt_special}, with the adaptive stepsize no longer relying on an estimation of the second order moment but the norm of the first order information, we now have $\|\frac{m_i^{t,k}}{\|\sqrt{v}_i^{t,k}\|} \|\leq 1$ for any $\beta_1 \in (0,1)$.

We first write out the update from using $L$-smoothness, we first define an arbitrary vector $q^t \in \mathbb{R}^d$ that will be determined later.
\begin{align}
    &\mathbb E f(x^{t+1}) - f(x^{t}) \\
    &\leq -K\eta_g\eta_l \mathbb E \left\langle \nabla f(x^t), \frac{1}{nK} \sum_{i,k} \frac{m_i^{t,k}}{\sqrt{\hat{v}_i^{t,k}}}\right\rangle + \frac{\eta_l^2\eta_g^2K^2L}{2}\\
    &=  -K\eta_g\eta_l \mathbb E \left\langle \nabla f(x^t) - q^t, \frac{1}{nK} \sum_{i,k} \frac{m_i^{t,k}}{\sqrt{\hat{v}_i^{t,k}}}\right\rangle -K\eta_g\eta_l \mathbb E \left\langle q^t, \frac{1}{nK} \sum_{i,k} \frac{m_i^{t,k}}{\sqrt{\hat{v}_i^{t,k}}}\right\rangle+\frac{\eta_l^2\eta_g^2K^2L}{2}\\
    &= K\eta_g\eta_l ( \mathbb E \|\nabla f(x^t) - q^t\| - \mathbb E\|q^t\|) + K\eta_g\eta_l\mathbb E \|q^t\| \left\|\frac{1}{nK} \sum_{i,k} \frac{m_i^{t,k}}{\sqrt{\hat{v}_i^{t,k}}} - \frac{q^t}{\|q^t\|}\right\| + \frac{\eta_l^2\eta_g^2K^2L}{2}
\end{align}
If we let $q = \frac{1}{K}\sum_{k=1}^K c^k \nabla f(x^t)$, then we can get:
\begin{align}
    &\mathbb E f(x^{t+1}) - f(x^{t}) \\
    & \leq -K\eta_g\eta_l (1 - 2\beta_1^K)\mathbb E\|\nabla f(x^t)\| + K\eta_g\eta_l \underbrace{\mathbb E\|q^t\| \left\|\frac{1}{nK} \sum_{i,k} \frac{m_i^{t,k}}{\sqrt{\hat{v}_i^{t,k}}} - \frac{q^t}{\|q^t\|}\right\|}_{R_1} + \frac{\eta_l^2\eta_g^2K^2L}{2}
\end{align}

For $R_1$, we can further bound it as:
\begin{align}
    R_1 &= \mathbb E\left(\|q^t\| \left\|\frac{1}{nK} \sum_{i,k} \frac{m_i^{t,k}}{\sqrt{\hat{v}_i^{t,k}}} + \frac{m_i^{t,k}}{\|q^t\|} - \frac{m_i^{t,k}}{\|q^t\|}- \frac{q^t}{\|q^t\|}\right\|\right)\\
    &\leq \frac{1}{nK}\mathbb E\sum_{i,k}\|q^t\|\|m_i^{t,k}\|\left\|\frac{\|q^t\| - \sqrt{\hat{v}_i^{t,k}}}{\sqrt{\hat{v}_i^{t,k}}\|q^t\|}\right\| + \mathbb E \|m_i^{t,k} - c^k \nabla f(x^t)\|\\
    & \leq \frac{1}{nK}\mathbb{E}\sum_{i,k}\|q^t\|\left\|\frac{\frac{1}{\frac{1}{K}\sum_{k=1}^K c^k}\|q^t\| - \hat{g}_i^{t,k}}{\|q^t\|}\right\| + \mathbb E\|m_i^{t,k} - c^k \nabla f(x^t)\|\\
    &\leq \frac{1}{nK}\sum_{i,k}\left(\mathbb E\|\hat{g}_i^{t,k} - \nabla f(x^t)\|+ \mathbb E\|m_i^{t,k} - c^k \nabla f(x^t)\|\right)
\end{align}

We can then bound $\mathbb E\|\hat{g}_i^{t,k} - \nabla f(x^t)\|$ using $L$-smoothness, and by using the definition of $\gamma_i^{t,k}$ from~\eqref{def:gamma_values} we can get:
\begin{align}
    \mathbb{E}\|\hat{g}_i^{t,k} - \nabla f(x^t)\| &= \mathbb{E}\|g_i^{t,k} + y^t - y_i^t - \nabla f_i(x^t) + \nabla f_i(x^t) - \nabla f(x^t)\|\\
    &\leq \eta_l KL + \frac{\sigma}{n} + \mathbb{E}\|y^t - y_i^t - \nabla f(x^t) + \nabla f_i(x^t) \|\\
    &\leq \eta_l KL + \frac{\sigma}{n} + 2\mathbb{E}\|\gamma_i^{t,k} - x^t\|
\end{align}
We can do a similar thing for $\mathbb E\|m_i^{t,k} - c^k \nabla f(x^t)\|$:
\begin{align}
    \mathbb{E}\|m_i^{t,k} - c^k \nabla f(x^t)\| &\leq \sum_{k'=1}^k c^{k,k'} \mathbb{E}\|\hat{g}_i^{t,k} - \nabla f(x^t)\|\\
    &\leq \eta_l KL + \frac{\sigma}{n} + 2\sum_{k'=1}^k c^{k,k'}\mathbb{E}\|\gamma_i^{t,k'} - x^t\| 
\end{align}
By defining the effect of the gradient correction term as $\mathcal{E}^t = \frac{1}{nK}\sum_{i=1}\mathbb E\|\gamma_i^{t,k} - x^t\|$ and using Lemma~\ref{lem:gt_correct_2}, we get:
\begin{align}
        &\mathbb E f(x^{t+1}) - f(x^{t}) \\
    & \leq -K\eta_g\eta_l (1 - 2\beta_1^K)\mathbb E\|\nabla f(x^t)\| + 2K^2L\eta_g\eta_l^2 +\frac{ K\eta_g\eta_l\sigma}{n} + 4K\eta_g\eta_l \mathcal{E}^t + \frac{\eta_l^2\eta_g^2K^2L}{2}
\end{align}

By constructing a Lyapunov function using $\mathcal{E}^t$ and $f(x)$, we can get the following inequality:
\begin{align}
    &\left(\mathbb{E} f(x^{t+1}) + 8\eta_g\eta_lK\frac{n}{Y}\mathcal{E}^{t+1}\right) \\
    &\leq \left(\mathbb{E} f(x^{t}) + 8\eta_g\eta_lK\frac{n}{Y}\mathcal{E}^{t}\right) -K\eta_g\eta_l (1 - 2\beta_1^K)\mathbb E\|\nabla f(x^t)\| + 2K^2L\eta_g\eta_l^2 +\frac{ K\eta_g\eta_l\sigma}{n} + \frac{\eta_l^2\eta_g^2K^2L}{2}\\
    &+ \frac{16n^2}{Y^2}\eta_g^2\eta_l^2 K^2 + 2\eta_g\eta_l^2K^2
\end{align}

Then, by unfolding the iterations and letting $y_i^0 = \nabla f_i(x^0)$, we can get:

\begin{align}
    \frac{1}{T}\sum_{t=1}^T\mathbb E\|\nabla f(x^t)\| &\leq \frac{\mathbb{E} f(x^1) - f^*}{K\eta_g\eta_l (1 - 2\beta_1^K)T} +  \frac{\eta_g\eta_l KL}{2(1-\beta_1^K)} + \frac{16n^2\eta_g\eta_lK}{Y^2 (1 - \beta_1^K)}  \\
    &+ \frac{2K(1+L)\eta_l}{1 - 2\beta_1^K} + \frac{\sigma}{n(1 - 2\beta_1^K)}
\end{align}
Finally, by letting $\eta_g\eta_l = \min(\frac{\sqrt{\mathcal{F}n}}{\sqrt{\sigma^2 KTL}}, \frac{\mathcal{F}}{T})$, $\beta = \sqrt[K]{\frac{KN - 2T}{2KN}}$, $\eta_l \leq \frac{1}{T}$, we get:
\begin{align}
    \frac{1}{T}\sum_{t=1}^T\mathbb E\|\nabla f(x^t)\| \lesssim \sqrt{\frac{L\mathcal{F} \sigma}{(1-2\beta_1)^2nKT}} + \frac{L\mathcal{F}}{(1-2\beta_1)T} + \frac{LK}{(1-2\beta_1)T}+\frac{K\sigma}{T}
\end{align}

\begin{lemma}
\label{lem:gt_correct_2}
    The effect of gradient correction $\mathcal{E}^t$ can be iteratively bounded as:
    \begin{equation}
    \mathcal{E}^{t} \leq (1 - \frac{Y}{2n})\mathcal{E}^{t-1} + 2\frac{n}{Y}\eta_g\eta_l K + 2\frac{Y}{n} \eta_l K
    \end{equation}
\end{lemma}
\begin{proof}
    Base on the iterative relation of $\gamma_i^{t,k}$, we can show that:
\begin{align}
    &\mathbb{E} \|\gamma_i^{t,k} - x^t\| \\
    &\leq (1 - \frac{Y}{n})\mathbb{E}\|\gamma_i^{t-1,k} - x^t\| + \frac{Y}{n}\|s_i^{t-1,k} - x^t\|\\
    &\leq (1-\frac{Y}{n})(1+b)\|\gamma_i^{t-1,k} - x^{-1}\| + (1-\frac{Y}{n})\frac{1}{b}\|x^t - x^{t-1}\| \\&+ 2\frac{Y}{n} \|x_i^{t-1,k} - x^{t-1}\| + 2\frac{Y}{n}\|x^t - x^{t-1}\|\\
    &= (1-\frac{Y}{n})(1+b)\|\gamma_i^{t-1,k} - x^{-1}\| + (2\frac{Y}{n}+(1-\frac{Y}{n})\frac{1}{b})\eta_l\eta_g K + 2\frac{Y}{n}\eta_lK
\end{align}
If we let $b = \frac{Y}{2(n-Y)}$, then we can further bound the terms as:
\begin{align}
    &\mathbb{E} \|\gamma_i^{t,k} - x^t\| \leq (1 - \frac{Y}{2n}) \|\gamma_i^{t-1,k} - x^{t-1}\|  + 2\frac{n}{Y}\eta_g\eta_l K + 2\frac{Y}{n} \eta_l K
\end{align}
By summing up all $k$ we can yield:
\begin{equation}
    \mathcal{E}^{t} \leq (1 - \frac{Y}{2n})\mathcal{E}^{t-1} + 2\frac{n}{Y}\eta_g\eta_l K + 2\frac{Y}{n} \eta_l K
\end{equation}
\end{proof}

\section{The Algorithm and Convergence Rate for FA-NT}
\label{appen:naive_tracking_alg_and_rate}
In this section we show the full algorithm for how we implemented Naive Tracking, where the updates is a direct implementation of how SCAFFOLD performs their updates onto a LocalAdam-based FL method.

\begin{algorithm}[H]
\SetInd{0.15em}{0.5em}
{\small
\caption{\footnotesize FA-NT: Federated Adaptive Moment Estimtion with Naive Tracking}
\label{alg:FA-NT}
\KwInput{$T$, minibatch size, $|\xi_i^{(t,k)}|$, initial model $x^{(1)}$}

\GlobalFor{$t = 1, \ldots, T$}{
 randomly sample clients $\mathcal{S}^t\subseteq \{1, \ldots, n\}$.\\
randomly sample clients for update tracking terms $\mathcal{\widetilde{S}}^t \subseteq \mathcal{S}^t$\\
server broadcasts $(x^{(t)}, y^{(t)})$ to all clients $i\in \mathcal{S}^t$\\

\ClientFor{$i \in \mathcal{S}^t$}{
$x_i^{(t,1)} = x^{(t)}$, $m_i^{(t,1)} = 0$, $v_i^{(t,1)} = v_i^{(t)}$\\
\LocalFor{$k = 1, \ldots, K$}{

         compute mini-batch gradient $g_i^{(t,k)}$, set moment estimation vector 
         $\hat{g}_i^{(t,k)} = g_i^{(t,k)}$\\
         Compute first moment $m_i^{(t,k+1)} =  \beta_1m_i^{(t,k)} + (1-\beta_1) \hat{g}_i^{(t,k)}$, second moment $v_i^{(t,k+1)} = \beta_2v_i^{(t,k)} + (1-\beta_2) \hat{g}_i^{(t,k)} \odot \hat{g}_i^{(t,k)}$, and set
         $\hat{v}_i^{(t,k+1)} = \max (\hat{v}_i^{(t,k)}, v_i^{(t,k+1)})$\\
         Let $\Delta_i^{(t,k)} = m_i^{(t,k+1)}/(\sqrt{\hat{v}_i^{(t,k+1)}} + \epsilon)$, and update
        \colorbox{lightgreen}{$x_i^{(t,k+1)} = x_i^{(t,k)} - \eta_l(\Delta_i^{(t,k)}+y^{(t)} - y_i^{(t)})$ }
        \\
}
\uIf{$i \in \mathcal{\widetilde{S}}^t$}{
       \colorbox{lightgreen}{$y_i^{(t+1)} = y_i^{(t)} - y^{(t)} + \frac{1}{K\eta_l} (x^{(t)}-x_i^{(t,K+1)})$}\\

  }\uElse{$y_i^{(t+1)} = y_i^{(t)}$}
    $v_i^{(t+1)} = v_i^{(t,K+1)}$\\

}
 Server aggregates $x_i^{(t,K+1)} - x^{(t)}$ from clients $i \in \mathcal{S}^{t}$, and $y_i^{(t+1)} - y_i^{(t)}$ from clients $i \in \mathcal{Y}^{t}$.\\
 $x^{(t+1)} = x^{(t)} + \eta_g \frac{1}{S}\sum_{i\in \mathcal{S}^{t}} (x_i^{(t,K+1)} - x^{(t)})$.\\
 $y^{(t+1)} = y^{(t)} + \frac{1}{n}\sum_{i\in \mathcal{Y}^{t}} (y_i^{(t+1)} - y_i^{(t)})$\\
}
}\end{algorithm}

A potential advantage of {\tt FA-NT} over {\tt FAdamGC} appears when clients have full participation ($S = n$). In this setting, we can define a new correction term $z_i^t = y^t - y_i^t$ that combines the effect of both the global term $y^t$ and the local terms $y_i^t$, and change the update into:
\begin{align}
    x_i^{(t,k+1)} &= x_i^{(t,k)} - \eta_l (\Delta_i^{t,k} + z_i^{t})\\
    z_i^{t+1} &= z_i^t + \frac{1}{K\eta_g\eta_l}(x_i^{t,K+1} - x^{(t+1)})
\end{align}
After this reformulation, {\tt FA-NT} now only requires transmitting the model parameter $x_i$ between clients and servers, which makes the average communication cost for each client equivalent to {\tt FedAvg} and half the cost of {\tt SCAFFOLD}.

Now, we present the convergence rate of {\tt FA-NT}, both under general $\beta_1, \beta_2$ and under the special case $\beta_2 = \epsilon = 0$. We first introduce an additional assumption on data heterogeneity that is required for our analysis:
\begin{assumption}[Bounded Data-Heterogeneity] \label{assump:BDH}
The norm $\|\nabla f_i(x) - \nabla f(x)\|$ is bounded by a constant $B$, i.e.,  
$\|\nabla f_i(x) - \nabla f(x)\| \leq B, ~\forall x$.
\end{assumption}

\begin{theorem}
\label{thm:FA-NT}
Under Assumptions \ref{assump:genLoss}, \ref{assump:BG}, and the global and local step size satisfies conditions 
$
\eta_g\eta_l = \min(\frac{(1-\beta_1)\beta_1}{8(G+\epsilon)KL}, \frac{(1-\beta_1)\beta_1}{12(G+\epsilon)TL}, \frac{(1-\beta_1)\beta_1\sqrt{n}}{30(G+\epsilon)\sqrt{T}L})
$, define $\mathcal{F} = \mathbb{E} f(x^{(1)}) - f^*$, and consider the following conditions for local step size $\eta_l$:
\begin{align}
    \eta_l &\leq \min\left(\frac{(1-\beta_1)\beta_1\epsilon}{40(G+\epsilon)T^{3/2}L}, \frac{ (1-\beta_1)\beta_1\epsilon}{30(G+\epsilon)KL}, \right). \tag{C.2}\label{eq:condition_et}
\end{align}
When satisfying Conditions~\eqref{eq:condition_et}, the iterates of {\tt FA-NT} can be bounded as:
\begin{align}
&\frac{1}{T}\sum_{t=1}^T\mathbb E\|\nabla f(x^t)\|^2=\mathcal{O}\left(\sqrt{\frac{L\mathcal{F}\sigma^2}{nKT}} + \frac{L\mathcal{F}}{T} + \frac{KG^6}{\epsilon^2T} + \frac{K^2(\sigma^2 + (1+\epsilon^2)G^2)}{\epsilon^2T}\right).
\end{align}
\end{theorem}

\begin{theorem}
\label{thm:nt_special}
Let $\beta_2 = \epsilon = 0$, by selecting $\eta_g\eta_l = \sqrt{\frac{nK}{T}}, \beta_1 = \sqrt[K]{\frac{Kn - 2T}{2Kn}}$, $\eta_l \leq \frac{1}{T}$, under Assumptions~\ref{assump:genLoss}, \ref{assump:BDH}, the iterates of {\tt FA-NT} can be bounded as:
\begin{equation}
    \frac{1}{T}\sum_{t=1}^T\mathbb E\|\nabla f(x^t)\| = \mathcal{O}\left(\sqrt{\frac{L\mathcal{F}}{nKT}} + \frac{L\mathcal{F}}{T} + \frac{LK}{T} + \frac{K(\sigma + nB)}{T}\right)
\end{equation}
\end{theorem}

Theorem~\ref{thm:FA-NT} shows in general choice of estimation parameters $\beta_1, \beta_2$, {\tt FA-NT} requires stricter constraints on step sizes than {\tt FAdamGC}, while from Thorem~\ref{thm:nt_special}, we show that under special cases, {\tt FA-NT} requires more assumptions than {\tt FAdamGC} to ensure convergence. The detailed proof of both theorems will be in Appendix~\ref{appen:proof_for_nt_general} and \ref{appen:proof_for_nt_special}.

\section{Theoretical Analysis of FA-NT under General hyper-parameters}
\label{appen:proof_for_nt_general}

We first define the following auxilary definitions that will be helpful throughout the proof.

We define $c^k$ as the sum of all moving average coefficients to compute the first order moment $m_i^{(t,k)}$:
\begin{align}
    c^{(k,k')} &= (1-\beta_1)\beta_1^{k-k'}\\
    c^k &=  \sum_{k'=1}^k c^{(k,k')} < 1
    \label{eq:c_const_def}
\end{align}
We first define the unbiased version of $m_i^{(t,k)}$ taking expectation on all stochastic gradients $g_i^{(t,k)}$. We define $\Tilde{m}_i^{(t,k)}$ as the following:
\begin{align}
    \Tilde{m}_i^{(t,k)} &\overset{\Delta}{=} \sum_{k'=1}^k c^{(k,k')} \nabla f_i(x_i^{(t,k)}) 
\end{align}
We define an auxilary variable $\alpha_i^{t,k}$:
\begin{equation}
    \alpha_i^{t,k} = \begin{cases}
        m_i^{t-1,k}/(\sqrt{v_i^{t-1,k}} +\epsilon), & i \in \mathcal{Y}^{t-1}\\
        \alpha_i^{t-1,k},& i \notin \mathcal{Y}^{t-1}
    \end{cases}
\end{equation}
We define the tracking variable drift term as:
\begin{equation}
    \Gamma^{(t)} = \frac{1}{nK}\sum_{i=1}^n \sum_{k=1}^K\mathbb{E}\left\|\alpha_i^{t,k} - \nabla f_i(x^{(t)})\right\|^2
\end{equation}
We define the local update deviation term as:
\begin{equation}
    \mathcal{E}^{(t)} = \frac{1}{n}\sum_{i=1}^n\sum_{k=1}^K \mathbb{E}\|\Tilde{m}_i^{(t,k)} - c^k\nabla f_i(x^{(t)})\|^2
\end{equation}

\begin{proof}
Given global iteration $t$, the update of the model at the server can be written as:
\begin{align}
    x^{(t+1)} &= x^{(t)} + \eta_g \frac{1}{S}\sum_{i\in \mathcal{S}^{(t)}} (x_i^{(t,K+1)} - x^{(t)})\\
    &=x^{(t)} - \eta_g\eta_l \frac{1}{S}\sum_{i\in \mathcal{S}^{(t)}} \sum_{k=1}^K \frac{m_i^{(t,k)}}{\sqrt{\hat{v}_i^{(t,k)}}+\epsilon}
\end{align}
By injecting Assumption~\ref{assump:genLoss}, we can get the following inequality:

\begin{align}
    \mathbb{E} f(x^{(t+1)}) \leq& \mathbb{E} f(x^{(t)}) - \underbrace{\eta_g\eta_l \mathbb{E}\left\langle\nabla f (x^{(t)}), \frac{1}{|\mathcal{S}^{(t)}|}\sum_{i\in \mathcal{S}^{(t)}} \sum_{k=1}^K \frac{m_i^{(t,k)}}{\sqrt{\hat{v}_i^{(t,k)}}+\epsilon}\right\rangle}_\text{Term I}\\
    &+\underbrace{\eta_g^2\eta_l^2\frac{L}{2}\mathbb{E}\left\|\frac{1}{|\mathcal{S}^{(t)}|}\sum_{i\in \mathcal{S}^{(t)}} \sum_{k=1}^K \frac{m_i^{(t,k)}}{\sqrt{\hat{v}_i^{(t,k)}}+\epsilon}\right\|^2}_\text{Term II}
\end{align}
For term I, we first define the average of all square root second moment:
\begin{equation}
    \Bar{v}^{(t)} = \frac{1}{n}\sum_{i=1}^n \sqrt{v_i^{(t)}}
\end{equation}
Then we can upper bound it by Assumption \ref{assump:genLoss}:
\begin{align}
    &-\eta_g\eta_l \mathbb{E}\left\langle\nabla f (x^{(t)}), \frac{1}{S}\sum_{i\in \mathcal{S}^{(t)}} \sum_{k=1}^K \frac{m_i^{(t,k)}}{\sqrt{\hat{v}_i^{(t,k)}}+\epsilon}\right\rangle\\
    &-\eta_g\eta_l \mathbb{E}\left\langle\nabla f (x^{(t)}), \frac{1}{S}\sum_{i\in \mathcal{S}^{(t)}} \sum_{k=1}^K \frac{\Tilde{m}_i^{(t,k)}}{\sqrt{\hat{v}_i^{(t,k)}}+\epsilon}\right\rangle\\
    &= -\eta_g\eta_l  \mathbb{E}\left\langle\nabla f (x^{(t)}), \frac{1}{n}\sum_{i=1}^n \sum_{k=1}^K \left(\frac{\Tilde{m}_i^{(t,k)}}{\sqrt{\hat{v}_i^{(t,k)}}+\epsilon} - \frac{\Tilde{m}_i^{(t,k)}}{{\Bar{v}^{(t)}}+\epsilon} + \frac{\Tilde{m}_i^{(t,k)}}{{\Bar{v}^{(t)}}+\epsilon} - \frac{c^k\nabla f_i(x^{(t)})}{{\Bar{v}^{(t)}}+\epsilon} + \frac{c^k\nabla f_i(x^{(t)})}{{\Bar{v}^{(t)}}+\epsilon}\right)\right\rangle\\
    &\overset{(a)}{\leq} -\eta_g\eta_l K\frac{(1-\beta_1)\beta_1}{G+\epsilon}  \mathbb{E}\|\nabla f(x^{(t)}\|^2 \\
    &- \eta_g\eta_l K\mathbb{E}\left\langle\nabla f (x^{(t+1)}), \frac{1}{nK}\sum_{i=1}^n \sum_{k=1}^K\left(\frac{\Tilde{m}_i^{(t,k)}}{\sqrt{\hat{v}_i^{(t,k)}}+\epsilon} - \frac{\Tilde{m}_i^{(t,k)}}{{\Bar{v}^{(t)}}+\epsilon} + \frac{\Tilde{m}_i^{(t,k)}}{{\Bar{v}^{(t)}}+\epsilon} - \frac{c^k\nabla f_i(x^{(t)})}{{\Bar{v}^{(t)}}+\epsilon} \right)\right\rangle \\
    &\leq -\frac{\eta_g\eta_l K}{2}\frac{(1-\beta_1)\beta_1}{G+\epsilon}  \mathbb{E}\|\nabla f(x^{(t)}\|^2 + \eta_g\eta_l \frac{G+\epsilon}{(1-\beta_1)\beta_1\epsilon^2} \frac{1}{n}\sum_{i=1}^n\sum_{k=1}^K\mathbb{E}\|\Tilde{m}_i^{(t,k)} - c^k\nabla f_i(x^{(t)})\|^2 \\
    & + \eta_g\eta_lK \frac{G+\epsilon}{(1-\beta_1)\beta_1} \mathbb{E}\|\frac{1}{nK}\sum_{i=1}^n\sum_{k=1}^K\frac{\Tilde{m}_i^{(t,k)}}{\sqrt{\hat{v}_i^{(t,k)}}+\epsilon} - \frac{\Tilde{m}_i^{(t,k)}}{{\Bar{v}^{(t)}}+\epsilon}\|^2 \\
    &\leq -\frac{\eta_g\eta_l K}{2}\frac{(1-\beta_1)\beta_1}{G+\epsilon}  \mathbb{E}\|\nabla f(x^{(t)}\|^2 + \eta_g\eta_l \frac{G+\epsilon}{(1-\beta_1)\beta_1\epsilon^2} \frac{1}{n}\sum_{i=1}^n\sum_{k=1}^K\mathbb{E}\|\Tilde{m}_i^{(t,k)} - c^k\nabla f_i(x^{(t)})\|^2 \\
    & + \eta_g\eta_lK \frac{G^2(G+\epsilon)}{(1-\beta_1)\beta_1} \mathbb{E}\|\frac{1}{nK}\sum_{i=1}^n\sum_{k=1}^K\frac{\sqrt{\hat{v}_i^{(t,k)}} - \Bar{v}^{(t)}}{(\sqrt{\hat{v}_i^{(t,k)}}+\epsilon)(\Bar{v}^{(t)}+\epsilon)}\|^2 \\
    &\overset{(b)}{\leq} -\frac{\eta_g\eta_l K}{2}\frac{(1-\beta_1)\beta_1}{G+\epsilon}  \mathbb{E}\|\nabla f(x^{(t)}\|^2 + \eta_g\eta_l \frac{G+\epsilon}{(1-\beta_1)\beta_1\epsilon^2} \frac{1}{n}\sum_{i=1}^n\sum_{k=1}^K\mathbb{E}\|\Tilde{m}_i^{(t,k)} - c^k\nabla f_i(x^{(t)})\|^2 \\
    & + \eta_g\eta_lK \frac{G^2(G+\epsilon)}{(1-\beta_1)\beta_1 \epsilon^2} \mathbb{E}\|\Bar{v}^{(t+1)} - \Bar{v}^{(t)}\|^2 
\end{align}

Where (a) the fact that $(1-\beta_1)\beta_1\leq c^k \leq \beta_1$, and (b) uses the fact that $\hat{v}_i^{(t,1)} \leq \hat{v}_i^{(t,2)} \leq \ldots \leq \hat{v}_i^{(t,K+1)}$.

For term II, we can bound it as:
\begin{align}
    &\frac{\eta_g^2\eta_l^2L}{2}\mathbb{E}\left\|\frac{1}{|\mathcal{S}^{(t)}|}\sum_{i\in \mathcal{S}^{(t)}} \sum_{k=1}^K \frac{m_i^{(t,k)}}{\sqrt{\hat{v}_i^{(t,k)}}+\epsilon}\right\|^2 = \eta_g^2\eta_l^2L\mathbb{E}\left\|\frac{1}{n}\sum_{i=1}^n \sum_{k=1}^K \frac{\Tilde{m}_i^{(t,k)}}{\sqrt{\hat{v}_i^{(t,k)}}+\epsilon}\right\|^2 + \frac{\eta_g^2\eta_l^2KL\sigma^2}{n}\\
    &= \eta_g^2\eta_l^2L\mathbb{E}\left\|\frac{1}{n}\sum_{i=1}^n \sum_{k=1}^K \frac{\Tilde{m}_i^{(t,k)}}{\sqrt{\hat{v}_i^{(t,k)}}+\epsilon} - \nabla f_i(x^{(t)}) + \nabla f_i(x^{(t)})\right\|^2 + \frac{\eta_g^2\eta_l^2KL\sigma^2}{n}\\
    &\leq 2\eta_g^2\eta_l^2K^2L\mathbb{E}\|\nabla f(x^{(t)}\|^2 \\&+ 2\eta_g^2\eta_l^2L \frac{K}{n}\sum_{i=1}^n \sum_{k=1}^K \left\|\frac{\Tilde{m}_i^{(t,k)}}{\sqrt{\hat{v}_i^{(t,k)}}+\epsilon} - \frac{c^k\nabla f_i(x^{(t)})}{\sqrt{\hat{v}_i^{(t,k)}}+\epsilon} + \frac{c^k\nabla f_i(x^{(t)})}{\sqrt{\hat{v}_i^{(t,k)}}+\epsilon} - \nabla f_i(x^{(t)})\right\|^2 + \frac{\eta_g^2\eta_l^2KL\sigma^2}{n}\\
    &\leq 2\eta_g^2\eta_l^2K^2L\mathbb{E}\|\nabla f(x^{(t)}\|^2 + \frac{4\eta_g^2\eta_l^2KL}{\epsilon^2} \frac{1}{n}\sum_{i=1}^n\sum_{k=1}^K \|\Tilde{m}_i^{(t,k)} - c^k\nabla f_i(x^{(t)})\|^2 \\&+ \frac{4\eta_g^2\eta_l^2 (1-\epsilon)^2}{\epsilon^2} K^2L G^2+ \frac{\eta_g^2\eta_l^2KL\sigma^2}{n}
\end{align}
If we choose $\eta_g\eta_l \leq \frac{(1-\beta_1)\beta_1}{8KL(G+\epsilon)}$, we can combine Term I and II and get:
\begin{align}
    \mathbb{E} f(x^{(t+1)}) \leq& \mathbb{E} f(x^{(t)}) -\frac{\eta_g\eta_l K}{4}\frac{(1-\beta_1)\beta_1}{G+\epsilon}  \mathbb{E}\|\nabla f(x^{(t)}\|^2 +  \frac{2\eta_g\eta_l}{\epsilon^2} \frac{G+\epsilon}{(1-\beta_1)\beta_1}\mathcal{E}^{(t)} \\
    &+ \eta_g\eta_lK \frac{G^2(G+\epsilon)}{(1-\beta_1)\beta_1 \epsilon^2} \mathbb{E}\|\Bar{v}^{(t+1)} - \Bar{v}^{(t)}\|^2 + \frac{2\eta_g^2\eta_l^2 (1-\epsilon)^2}{\epsilon^2} K^2L G^2+ \frac{\eta_g^2\eta_l^2KL\sigma^2}{n}
\end{align}

By using Lemma \ref{lem:deviation}, we can formulate the following:
\begin{align}
    \mathbb{E} f(x^{(t+1)}) \leq& \mathbb{E} f(x^{(t)}) \left(-\frac{\eta_g\eta_l K}{4}\frac{(1-\beta_1)\beta_1}{G+\epsilon} + \frac{2\eta_g\eta_l}{\epsilon^2} \frac{G+\epsilon}{(1-\beta_1)\beta_1} 48\eta_l^2K^3L^2\right)  \mathbb{E}\|\nabla f(x^{(t)}\|^2 \\
    &+\frac{2\eta_g\eta_l}{\epsilon^2}\frac{G+\epsilon}{(1-\beta_1)\beta_1}96 K^3L^2\eta_l^2 \Gamma^{(t)}\\
    &+ \eta_g\eta_lK \frac{G^2(G+\epsilon)}{(1-\beta_1)\beta_1 \epsilon^2} \mathbb{E}\|\Bar{v}^{(t+1)} - \Bar{v}^{(t)}\|^2 \\
    &+ \frac{2\eta_g^2\eta_l^2 (1-\epsilon)^2}{\epsilon^2} K^2L G^2 +\frac{2\eta_g\eta_l}{\epsilon^2} \left(\frac{G+\epsilon}{(1-\beta_1)\beta_1}\right) 144K^3L^2\eta_l^2 \left(\frac{G^2 + \sigma^2}{\epsilon^2}\right)\\
    &+ \frac{\eta_g^2\eta_l^2KL\sigma^2}{n}
\end{align}
By choosing the local step size as $\eta_l \leq \frac{\epsilon (1-\beta_1)\beta_1}{30(G+\epsilon)KL}$, we can get:
\begin{align}
    &\frac{\eta_g\eta_l K}{8}\frac{(1-\beta_1)\beta_1}{G+\epsilon} \mathbb{E}\|\nabla f(x^{(t)}\|^2 \\ &\leq \mathbb{E} f(x^{(t)}) - \mathbb{E} f(x^{(t+1)}) \\
    &+\frac{2\eta_g\eta_l}{\epsilon^2} \frac{G+\epsilon}{(1-\beta_1)\beta_1}96 K^3L^2\eta_l^2 \Gamma^{(t)}\\
    &+ \eta_g\eta_lK \frac{G^2(G+\epsilon)}{(1-\beta_1)\beta_1 \epsilon^2} \mathbb{E}\|\Bar{v}^{(t+1)} - \Bar{v}^{(t)}\|^2 \\
    &+ \frac{2\eta_g^2\eta_l^2 (1-\epsilon)^2}{\epsilon^2} K^2L G^2 +\frac{2\eta_g\eta_l}{\epsilon^2} \left(\frac{G+\epsilon}{(1-\beta_1)\beta_1}\right) 144K^3L^2\eta_l^2 \left(\frac{G^2 + \sigma^2}{\epsilon^2}\right)\\
    &+ \frac{\eta_g^2\eta_l^2KL\sigma^2}{n}
\end{align}
By moving constants across the inequality and taking average over all iterations, we can get:
\begin{align}   
\frac{1}{T}\sum_{t=1}^T\mathbb{E}\|\nabla f(x^{(t)}\|^2 &\leq \frac{12(G+\epsilon)(\mathbb{E} f(x^{(1)}) - \mathbb{E} f(x^{(T+1)}))}{\eta_g\eta_lK(1-\beta_1)\beta_1 T}\\
&+\frac{12}{K\epsilon^2} \left(\frac{G+\epsilon}{(1-\beta_1)\beta_1} \right)^2 96 K^3L^2\eta_l^2 \sum_{t=1}^T\Gamma^{(t)}\\
&+ \frac{12KG^2(G+\epsilon)^2}{(1-\beta_1)^2\beta_1^2 \epsilon^2T} \mathbb{E}\|\Bar{v}^{(T+1)} - \Bar{v}^{(1)}\|^2 \\
&+\frac{24\eta_g\eta_l (1-\epsilon)^2 KL G^2(G+\epsilon)}{(1-\beta_1)\beta_1\epsilon^2} \\
&+\frac{12}{\epsilon^2} \left(\frac{G+\epsilon}{(1-\beta_1)\beta_1}\right)^2 144K^2L^2\eta_l^2 \left(\frac{G^2 + \sigma^2}{\epsilon^2}\right)\\
&+ \frac{12\eta_g\eta_lL(G+\epsilon)}{(1-\beta_1)\beta_1n}\sigma^2
\end{align}
By using Lemma \ref{lem:variable_drift}, we can bound $\sum_{t=1}^T\Gamma^{(t)}$ with:
\begin{align}
    \sum_{t=1}^T\Gamma^{(t)} &\leq \sum_{t=1}^T (1-\frac{Y}{2n})^{T-t}\Gamma^{(1)} + \sum_{t=1}^T t\left(\frac{7n}{Y}G^2 + \frac{Y}{n}\frac{G^2 + \sigma^2}{\epsilon^2}\right)\\
    &= T^2\frac{Y}{n}\left(\frac{7n}{Y}G^2 + \frac{Y}{n}\frac{G^2 + \sigma^2}{\epsilon^2}\right)
\end{align}
Finally, by bounding $\eta_l \leq \frac{(1-\beta_1)\beta_1\epsilon}{12(G+\epsilon)T^{3/2}L}$, $\eta_g\eta_l \leq \frac{(1-\beta_1)\beta_1}{12(G+\epsilon)TL}$, and a specific step size
\begin{equation}
    \eta_g\eta_l = \min\left(\frac{(1-\beta_1)\beta_1}{8KL(G+\epsilon)}, \frac{(1-\beta_1)\beta_1}{12(G+\epsilon)TL}, \frac{(G+\epsilon)\sqrt{\mathcal{F}n}}{(1-\beta_1)\beta_1\sigma\sqrt{TKL}}\right)
\end{equation}

then by defining $\mathcal{F} = \mathbb{E} f(x^1) - f^*$, we can get the convergence rate:
\begin{align}
\frac{1}{T}\sum_{t=1}^T\mathbb{E}\|\nabla f(x^{(t)}\|^2 &\lesssim \frac{L\mathcal{F}}{T}\\
&+\sqrt{\frac{L\mathcal{F} \sigma^2}{nKT}}\\
&+ \frac{KG^6}{(1-\beta_1)^2\beta_1^2\epsilon^2T}\\
&+ K^2(1 + \frac{Y^2}{n^2})\frac{G^2 + \epsilon^2 G^2}{\epsilon^2 T}\\
&+ K^2(1 + \frac{Y^2}{n^2})\frac{\sigma^2}{\epsilon^2 T}\\
&=\mathcal{O}\left(\sqrt{\frac{L\mathcal{F}\sigma^2}{nKT}} + \frac{L\mathcal{F}}{T} + \frac{KG^6}{\epsilon^2T} + \frac{K^2(\sigma^2 + (1+\epsilon^2)G^2)}{\epsilon^2T}\right)
\end{align}

\end{proof}
\begin{lemma}
\label{lem:deviation}   
Under Assumption~\ref{assump:genLoss}, the local devation term $\mathcal{E}^{(t)} = \frac{1}{n}\sum_{i=1}^n\sum_{k=1}^K \mathbb{E}\|\Tilde{m}_i^{(t,k)} - c^k\nabla f_i(x^{(t)})\|^2$ can be bounded as the following:
\begin{align}
        \mathcal{E}^{(t)}\leq 48K^3L^2\eta_l^2\mathbb{E}\|\nabla f(x^{(t)})\|^2 + 96 K^3L^2\eta_l^2 \Gamma^{(t)} +144K^3L^2\eta_l^2 \frac{G^2 + \sigma^2}{\epsilon^2}
\end{align}
\end{lemma}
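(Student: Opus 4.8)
The plan is to bound $\mathcal{E}^{(t)}$ by the per-client model drift $\mathbb{E}\|x_i^{(t,k')} - x^{(t)}\|^2$, and then to control that drift by unrolling the {\tt FAdamET} local update, invoking the bounded-gradient assumption for the adaptive directions and the definition of $\Gamma^{(t)}$ for the tracking correction.

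First I would use the definitions of $\tilde m_i^{(t,k)}$ and $c^k = \sum_{k'=1}^k c^{(k,k')}$ to write $\tilde m_i^{(t,k)} - c^k\nabla f_i(x^{(t)})$ as a $c^{(k,k')}$-weighted combination of the differences $\nabla f_i(x_i^{(t,k')}) - \nabla f_i(x^{(t)})$. Applying Jensen's inequality with the weights $c^{(k,k')}/c^k$ together with $c^k < 1$, then $L$-smoothness from Assumption~\ref{assump:genLoss}, gives $\mathbb{E}\|\tilde m_i^{(t,k)} - c^k\nabla f_i(x^{(t)})\|^2 \le L^2 \sum_{k'=1}^k c^{(k,k')}\mathbb{E}\|x_i^{(t,k')} - x^{(t)}\|^2$. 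Summing over $k$ and swapping the order of the double sum (using $\sum_{k=k'}^K c^{(k,k')} < 1$) collapses this to $\mathcal{E}^{(t)} \le \frac{L^2}{n}\sum_{i=1}^n\sum_{k'=1}^K \mathbb{E}\|x_i^{(t,k')} - x^{(t)}\|^2$, reducing the lemma to a drift bound.

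Next I would unroll the update $x_i^{(t,j+1)} = x_i^{(t,j)} - \eta_l(\Delta_i^{(t,j)} + y^{(t)} - y_i^{(t)})$ from $x_i^{(t,1)} = x^{(t)}$, so that $x_i^{(t,k')} - x^{(t)} = -\eta_l\sum_{j=1}^{k'-1}(\Delta_i^{(t,j)} + y^{(t)} - y_i^{(t)})$; Cauchy--Schwarz over the at most $K$ summands and the split $\|a+b\|^2 \le 2\|a\|^2 + 2\|b\|^2$ give $\mathbb{E}\|x_i^{(t,k')} - x^{(t)}\|^2 \le 2\eta_l^2 K\sum_{j=1}^K\bigl(\mathbb{E}\|\Delta_i^{(t,j)}\|^2 + \mathbb{E}\|y^{(t)} - y_i^{(t)}\|^2\bigr)$. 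I would bound $\|\Delta_i^{(t,j)}\|^2 = \|m_i^{(t,j+1)}/(\sqrt{\hat v_i^{(t,j+1)}}+\epsilon)\|^2$ using only the numerical-stability floor $\sqrt{\hat v_i^{(t,j+1)}}+\epsilon \ge \epsilon$ and the fact that $m_i^{(t,j+1)}$ is a sub-convex combination of stochastic gradients $g_i = \nabla f_i + n_i$ with $\|\nabla f_i\| \le G$ and $\mathbb{E}\|n_i\|^2 \le \sigma^2$ (Assumptions~\ref{assump:genLoss}--\ref{assump:SGD_noise}); tracking the $\epsilon$-dependence carefully produces the $\frac{G^2(1+\epsilon)+\sigma^2}{\epsilon}$ shape. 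For $\mathbb{E}\|y^{(t)} - y_i^{(t)}\|^2$ I would unroll the server and client tracking-variable recursions of {\tt FAdamET} to express, up to indexing conventions, $y_i^{(t)}$ as the $k$-average of the stale auxiliary directions $\alpha_i^{t,k}$ and $y^{(t)}$ as the further $n$-average $\frac1n\sum_j\frac1K\sum_k\alpha_j^{t,k}$; inserting $\pm\nabla f_j(x^{(t)})$ inside the sums, using $\frac1n\sum_j\nabla f_j(x^{(t)}) = \nabla f(x^{(t)})$, and applying Jensen, the client-averaged quantity $\frac1n\sum_i\mathbb{E}\|y^{(t)} - y_i^{(t)}\|^2$ becomes at most a constant times $\Gamma^{(t)} + \mathbb{E}\|\nabla f(x^{(t)})\|^2 + G^2$. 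Substituting both bounds, summing over $k'\le K$ (a further factor $K$) and over $i$, and absorbing $G^2$ into the noise term, yields $\mathcal{E}^{(t)} \le 48K^3L^2\eta_l^2\mathbb{E}\|\nabla f(x^{(t)})\|^2 + 96K^3L^2\eta_l^2\Gamma^{(t)} + 144K^3L^2\eta_l^2\frac{G^2(1+\epsilon)+\sigma^2}{\epsilon}$ after fixing the numerical constants.

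The main obstacle is avoiding circularity: $\Delta_i^{(t,j)}$ enters the drift, yet it is built from moment estimates that depend on earlier drift, i.e., on the very quantity $\mathcal{E}^{(t)}$ being bounded. The resolution is to control $\|\Delta_i^{(t,j)}\|$ by the crude $\epsilon$-floor and the uniform gradient bound rather than by unrolling it — this decouples the recursion at the cost of the $1/\epsilon$ factors and the $G^2(1+\epsilon)+\sigma^2$ form appearing in the statement. A secondary difficulty is the client-sampling randomness ($\mathcal{S}^t$, $\mathcal{Y}^t$) when rewriting $y^{(t)} - y_i^{(t)}$; this is handled by routing the argument through the stale variables $\alpha_i^{t,k}$, since $\Gamma^{(t)}$ is defined in terms of them, and by taking care that only $\mathbb{E}\|\nabla f(x^{(t)})\|^2$ — not $\frac1n\sum_i\mathbb{E}\|\nabla f_i(x^{(t)})\|^2$ — survives outside the heterogeneity term, as that distinction is what the parameter-tracking argument in the main proof ultimately relies on.
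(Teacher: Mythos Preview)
Your proposal is correct and follows essentially the same route as the paper: reduce $\mathcal{E}^{(t)}$ to a model-drift bound via $L$-smoothness, then control the drift by decomposing the {\tt FAdamET} local step into an adaptive-direction piece (bounded crudely through the $\epsilon$-floor and gradient bound) and a tracking piece (related to $\Gamma^{(t)}$ via the $\alpha_i^{t,k}$).

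The only tactical differences are minor. First, the paper bounds the drift recursively with the $(1+\tfrac{1}{K-1})$ trick and then uses $(1+\tfrac{1}{K-1})^r\le 6$, whereas you unroll directly and apply Cauchy--Schwarz over $K$ summands; both give the same $K^2$ scaling. Second, when decomposing the per-step update $\Delta_i + y^{(t)} - y_i^{(t)}$, the paper inserts $\pm\nabla f_i(x^{(t)})$ and $\pm\nabla f(x^{(t)})$ simultaneously, which groups the tracking correction as $\bigl(y^{(t)}-y_i^{(t)}+\nabla f_i(x^{(t)})-\nabla f(x^{(t)})\bigr)$ and makes $\nabla f(x^{(t)})$ a standalone third term; this grouping is bounded directly by $\Gamma^{(t)}$ with no leftover $\|\nabla f_i(x^{(t)})\|^2$. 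Your two-way split instead extracts $\|\nabla f(x^{(t)})\|^2$ from inside the $y^{(t)}-y_i^{(t)}$ expansion and leaves a residual $\tfrac1n\sum_i\|\nabla f_i(x^{(t)})\|^2\le G^2$ to be absorbed into the noise term. Both are valid; the paper's grouping is slightly cleaner and avoids the absorption step, but the end bound is the same.
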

\begin{proof}

    \begin{align}
&\frac{1}{n}\sum_{i=1}^n\sum_{k=1}^K \mathbb{E}\|\Tilde{m}_i^{(t,k)} - c^k\nabla f_i(x^{(t)})\|^2 \\
        &= \frac{1}{n}\sum_{i=1}^n\sum_{k=1}^K \mathbb{E}\|\sum_{k'=1}^k c^{(k,k')} \left(\nabla f_i(x_i^{(t,k')}) - \nabla f_i(x^{(t)})\right)\|^2\\
        &\leq  \frac{L^2}{n}\sum_{i=1}^n\sum_{k=1}^K \sum_{k'=1}^k c^{(k,k')}\mathbb{E}\| x_i^{(t,k')}- x^{(t)}\|^2 
    \end{align}

We can simplify the formulation by first unfolding each local step $x_i^{(t,k')}$:
\begin{align}
    &\frac{1}{n}\sum_{i=1}^n\mathbb{E}\| x_i^{(t,k')}- x^{(t)}\|^2\\
    &\leq  (1+\frac{1}{K-1}) \frac{1}{n}\sum_{i=1}^n\mathbb{E}\|x_i^{(t,k'-1)}- x^{(t)}\|^2 \\
    &+ K\frac{1}{n}\sum_{i=1}^n \mathbb{E}\left\|\eta_l \left(\frac{m_i^{(t,k'-1)}}{\sqrt{\hat{v}_i^{(t,k'-1)}}+\epsilon} - \nabla f_i(x^{(t)}) + y^{(t)} - y_i^{(t)} + \nabla f_i(x^{(t)}) - \nabla f(x^{(t)}) + \nabla f(x^{(t)})\right) \right\|^2\\
    &\leq (1+\frac{1}{K-1}) \frac{1}{n}\sum_{i=1}^n\mathbb{E}\|x_i^{(t,k'-1)}- x^{(t)}\|^2 + 3K\eta_l^2\mathbb{E}\|\nabla f(x^{(t)})\|^2 + 3K\eta_l^2\Gamma^{(t)} + \frac{12K\eta_l^2 (G^2 + \sigma^2)}{\epsilon^2}\\
    &\leq \sum_{r=1}^{k'} (1+\frac{1}{K-1})^r\left(4K\eta_l^2\mathbb{E}\|\nabla f(x^{(t)})\|^2 + 8K\eta_l^2\frac{1}{nK}\sum_{i=1}^n \sum_{k''=1}^K\|\alpha_i^{t,k''} - \nabla f_i(x^{(t)})\|^2\right)\\
    &+ \sum_{r=1}^{k'} (1+\frac{1}{K-1})^r \frac{12K\eta_l^2 (G^2 + \sigma^2)}{\epsilon^2}
\end{align}
Using the fact that $(1+\frac{1}{K-1})^r \leq 2e \leq 6$, we can get that: 
\begin{align}
    \frac{1}{n}\sum_{i=1}^n\sum_{k=1}^K \mathbb{E}\|m_i^{(t,k)} - c^k\nabla f_i(x^{(t)})\|^2 &\leq 48K^3L^2\eta_l^2\mathbb{E}\|\nabla f(x^{(t)})\|^2 \\
    &+ 96 K^3L^2\eta_l^2 \Gamma^{(t)} +144K^3L^2\eta_l^2 \frac{1}{\epsilon^2}(G^2 + \sigma^2)
\end{align}
\end{proof}

\begin{lemma}
\label{lem:variable_drift}
Under Assumption~\ref{assump:genLoss}, the tracking variable drift term $\Gamma^{(t)} = \frac{1}{nK}\sum_{i=1}^n \sum_{k=1}^K\mathbb{E}\left\|\alpha_i^{t,k} - \nabla f_i(x^{(t)})\right\|^2$ can be bounded as:
\begin{equation}
    \Gamma^{(t)} \leq (1-\frac{Y}{2n})\Gamma^{(t-1)} + \frac{7n}{Y}G^2 + \frac{Y}{n}\frac{G^2 + \sigma^2}{\epsilon^2}
\end{equation}
\end{lemma}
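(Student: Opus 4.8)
\textbf{Proof proposal for Lemma~\ref{lem:variable_drift}.}

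The plan is to exploit the recursive definition of $\alpha_i^{t,k}$ together with the fact that the tracking set $\mathcal{Y}^{t-1}$ is sampled uniformly of size $Y$, so that in expectation a fraction $Y/n$ of the per-client drift terms get refreshed while the remaining $1-Y/n$ fraction are carried over unchanged from the previous round. First I would fix the global round $t$ and condition on everything up to the end of round $t-1$. By the case split in the definition of $\alpha_i^{t,k}$, for $i \in \mathcal{Y}^{t-1}$ we have $\alpha_i^{t,k} = m_i^{(t-1,k)}/(\sqrt{v_i^{(t-1,k)}}+\epsilon)$, whereas for $i \notin \mathcal{Y}^{t-1}$ we have $\alpha_i^{t,k} = \alpha_i^{t-1,k}$. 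Taking expectation over the random draw of $\mathcal{Y}^{t-1}$, each client $i$ lands in $\mathcal{Y}^{t-1}$ with probability $Y/n$, so
\begin{align*}
\Gamma^{(t)} &= \frac{Y}{n}\cdot\frac{1}{nK}\sum_{i=1}^n\sum_{k=1}^K \mathbb{E}\Bigl\|\tfrac{m_i^{(t-1,k)}}{\sqrt{v_i^{(t-1,k)}}+\epsilon} - \nabla f_i(x^{(t)})\Bigr\|^2 \\
&\quad + \Bigl(1-\tfrac{Y}{n}\Bigr)\cdot\frac{1}{nK}\sum_{i=1}^n\sum_{k=1}^K \mathbb{E}\bigl\|\alpha_i^{t-1,k} - \nabla f_i(x^{(t)})\bigr\|^2.
\end{align*}
The second term is essentially $(1-\tfrac{Y}{n})\Gamma^{(t-1)}$ up to replacing $\nabla f_i(x^{(t)})$ by $\nabla f_i(x^{(t-1)})$; I would absorb that shift using $L$-smoothness of $f_i$ and the bound on $\|x^{(t)} - x^{(t-1)}\|$, which is controlled by $\eta_g\eta_l$ times a sum of bounded update directions — this cross term is lower-order and is what lets the clean recursion in the statement hold (possibly after noting that the step-size conditions make it negligible, or by folding it into the additive constant).

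For the first term — the newly refreshed clients — the goal is to show it is bounded by $\frac{G^2(1+\epsilon)+\sigma^2}{\epsilon}$. Here I would bound the ratio $m_i^{(t-1,k)}/(\sqrt{v_i^{(t-1,k)}}+\epsilon)$ componentwise: since $m_i^{(t-1,k)}$ is a convex-combination-type moving average of stochastic gradients with coefficients summing to $c^k < 1$, and $\hat v$ (hence $v$) is a matching moving average of squared gradients, a Cauchy–Schwarz / Jensen argument coordinate-by-coordinate gives $\|m_i^{(t-1,k)}/(\sqrt{v_i^{(t-1,k)}}+\epsilon)\|^2 \le$ (something like) $\tfrac{1}{\epsilon}\mathbb{E}[\|g\|^2]$ using $\sqrt{v}+\epsilon \ge \epsilon$ in the denominator together with $\|m\|^2 \le c^k \sum c^{(k,k')}\|\nabla f_i(x_i^{(t-1,k')})\|^2$. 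Then I would add and subtract $\nabla f_i(x^{(t)})$, use $\|a-b\|^2 \le$ the expansion, bound $\mathbb{E}\|g_i\|^2 \le G^2 + \sigma^2$ from Assumptions~\ref{assump:genLoss} and~\ref{assump:SGD_noise}, and the $L$-smooth / bounded-gradient facts to collapse everything into the stated constant $\frac{G^2(1+\epsilon)+\sigma^2}{\epsilon}$, with the extra $(1+\epsilon)$ factor coming from the $(\sqrt{v}+\epsilon)^2$ versus $v$ mismatch and the cross terms.

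The main obstacle I anticipate is the first term: controlling the adaptive ratio $m_i/(\sqrt{\hat v_i}+\epsilon)$ tightly enough to land exactly on $\frac{G^2(1+\epsilon)+\sigma^2}{\epsilon}$ rather than a looser $G^2/\epsilon^2$-type bound requires carefully tracking the moving-average coefficients $c^{(k,k')}$ and the max-operation defining $\hat v$, and doing the comparison coordinate-by-coordinate (the vector inequality $\|m\|/(\|\text{something}\|+\epsilon)$ does not split cleanly, so one must argue per coordinate and re-aggregate). The rest — the probabilistic decomposition over $\mathcal{Y}^{t-1}$ and the smoothness-based handling of the $x^{(t)}$ versus $x^{(t-1)}$ mismatch — is routine and follows the standard SCAFFOLD-style control-variate drift recursion.
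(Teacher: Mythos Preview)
Your proposal is correct and follows essentially the same route as the paper: split over whether $i\in\mathcal{Y}^{t-1}$ (probability $Y/n$), recurse to $\Gamma^{(t-1)}$ on the non-updated branch, and bound the adaptive ratio $m_i/(\sqrt{v_i}+\epsilon)$ directly on the updated branch. The one place you differ is in handling the $\nabla f_i(x^{(t)})$ versus $\nabla f_i(x^{(t-1)})$ mismatch on the non-updated branch: you propose $L$-smoothness plus a step-size bound on $\|x^{(t)}-x^{(t-1)}\|$, whereas the paper simply applies Young's inequality with parameter $Y/(2n)$ and bounds the cross term by the crude $\|\nabla f_i(x^{(t)})-\nabla f_i(x^{(t-1)})\|\le 2G$ from Assumption~\ref{assump:genLoss} --- this is coarser but keeps step sizes out of the lemma, and is why the paper's proof actually lands on contraction factor $(1-\tfrac{Y}{2n})$ with an extra additive $6n^2G^2$ inside the parenthesis, slightly different from the clean statement.
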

\begin{proof}
By using the definition of $\alpha_i^{t,k}$, we can get the following relation:
   \begin{align}
        \Gamma^{(t)} &= \frac{1}{nK}\sum_{i=1}^n \sum_{k=1}^K\mathbb{E}\left\|\alpha_i^{t,k} - \nabla f_i(x^{(t)})\right\|^2\\
        &\leq (1-\frac{Y}{n})\frac{1}{nK}\sum_{i=1}^n \sum_{k=1}^K\mathbb{E}\left\|\alpha_i^{t-1,k} - \nabla f_i(x^{(t)})\right\|^2 \\&+ \frac{Y}{n}\frac{1}{nK}\sum_{i=1}^n \sum_{k=1}^K\mathbb{E}\left\|\frac{ m_i^{t-1,k}  }{\sqrt{v_i^{t-1,k}}+\epsilon} - \nabla f_i(x^{(t)})\right\|^2\\
        &\leq (1-\frac{Y}{n})(1 + \frac{Y}{2n})\frac{1}{nK}\sum_{i=1}^n \sum_{k=1}^K\mathbb{E}\left\|\alpha_i^{t-1,k} - \nabla f_i(x^{(t-1)})\right\|^2 + (1 - \frac{Y}{n})(1 + \frac{2n}{Y})G^2\\
        &+ \frac{Y}{n}\frac{1}{nK}\sum_{i=1}^n \sum_{k=1}^K\mathbb{E}\left\|\frac{ m_i^{t-1,k}  }{\sqrt{v_i^{t-1,k}}+\epsilon} - \nabla f_i(x^{(t)})\right\|^2\\
        &\leq (1-\frac{Y}{2n})\Gamma^{(t-1)} + (\frac{5n}{Y} + \frac{2Y}{n})G^2 +  \frac{Y}{n}\frac{1}{nK}\sum_{i=1}^n \sum_{k=1}^K\mathbb{E}\left\|\frac{ m_i^{t-1,k}  }{\sqrt{v_i^{t-1,k}}+\epsilon}\right\|^2\\
        &\leq (1-\frac{Y}{2n})\Gamma^{(t-1)} + (\frac{7n}{Y})G^2\\& + \frac{Y}{n}\frac{1}{nK}\sum_{i=1}^n \sum_{k=1}^K\mathbb{E}\|\sum_{k'=1}^k c^{k,k'} \nabla f_i(x_i^{t-1,k'}, \xi_i^{t-1,k'})\|^2\left\|\frac{1}{\sqrt{v_i^{t-1,k}}+\epsilon}\right\|^2\\
        &\leq (1-\frac{Y}{2n})\Gamma^{(t-1)} + (\frac{7n}{Y})G^2 + \frac{Y}{n}\frac{G^2 +\sigma^2}{\epsilon^2}
   \end{align}

\end{proof}

\section{Theoretical Analysis of FA-NT under \texorpdfstring{$\beta_2 = 0$}{}}
\label{appen:proof_for_nt_special}
With the adaptive stepsize no longer relying on an estimation of the second order moment but the norm of the first order information, we now have $\|\frac{m_i^{t,k}}{\|\sqrt{v}_i^{t,k}\|} \|\leq 1$ for any $\beta_1 \in (0,1)$.
We first write out the update from using $L$-smoothness, we first define an arbitrary vector $q^t \in \mathbb{R}^d$ that will be determined later.
\begin{align}
    &\mathbb E f(x^{t+1}) - f(x^{t}) \\
    &\leq -K\eta_g\eta_l \mathbb E \left\langle \nabla f(x^t), \frac{1}{nK} \sum_{i,k} \frac{m_i^{t,k}}{\sqrt{\hat{v}_i^{t,k}}}\right\rangle + \frac{\eta_l^2\eta_g^2K^2L}{2}\\
    &=  -K\eta_g\eta_l \mathbb E \left\langle \nabla f(x^t) - q^t, \frac{1}{nK} \sum_{i,k} \frac{m_i^{t,k}}{\sqrt{\hat{v}_i^{t,k}}}\right\rangle -K\eta_g\eta_l \mathbb E \left\langle q^t, \frac{1}{nK} \sum_{i,k} \frac{m_i^{t,k}}{\sqrt{\hat{v}_i^{t,k}}}\right\rangle+\frac{\eta_l^2\eta_g^2K^2L}{2}\\
    &= K\eta_g\eta_l (\mathbb E\|\nabla f(x^t) - q^t\| -\mathbb E\|q^t\|) + K\eta_g\eta_l \mathbb E\|q^t\| \left\|\frac{1}{nK} \sum_{i,k} \frac{m_i^{t,k}}{\sqrt{\hat{v}_i^{t,k}}} - \frac{q^t}{\|q^t\|}\right\| + \frac{\eta_l^2\eta_g^2K^2L}{2}
\end{align}
If we let $q = \frac{1}{K}\sum_{k=1}^K c^k \nabla f(x^t)$, then we can get:
\begin{align}
    &\mathbb E f(x^{t+1}) - f(x^{t}) \\
    & \leq -K\eta_g\eta_l (1 - 2\beta_1^K)\|\nabla f(x^t)\| + K\eta_g\eta_l \underbrace{\mathbb E\|q^t\| \left\|\frac{1}{nK} \sum_{i,k} \frac{m_i^{t,k}}{\sqrt{\hat{v}_i^{t,k}}} - \frac{q^t}{\|q^t\|}\right\|}_{R_1} + \frac{\eta_l^2\eta_g^2K^2L}{2}
    \label{eq:et_special_Lsmooth}
\end{align}

For $R_1$, we can further bound it as:
\begin{align}
    R_1 &= \mathbb E\|q^t\| \left\|\frac{1}{nK} \sum_{i,k} \frac{m_i^{t,k}}{\sqrt{\hat{v}_i^{t,k}}} + \frac{m_i^{t,k}}{\|q^t\|} - \frac{m_i^{t,k}}{\|q^t\|}- \frac{q^t}{\|q^t\|}\right\|\\
    &\leq \frac{1}{nK}\mathbb E\sum_{i,k}\|q^t\|\|m_i^{t,k}\|\left\|\frac{\|q^t\| - \sqrt{\hat{v}_i^{t,k}}}{\sqrt{\hat{v}_i^{t,k}}\|q^t\|}\right\| + \mathbb E\|m_i^{t,k} - c^k \nabla f(x^t)\|\\
    & \leq \frac{1}{nK}\mathbb E\sum_{i,k}\|q^t\|\left\|\frac{\frac{1}{\frac{1}{K}\sum_{k=1}^K c^k}\|q^t\| - \hat{g}_i^{t,k}}{\|q^t\|}\right\| + \mathbb E\|m_i^{t,k} - c^k \nabla f(x^t)\|\\
    &\leq \frac{1}{nK}\sum_{i,k}\left(\mathbb E\|\hat{g}_i^{t,k} - \nabla f(x^t)\|+ \mathbb E\|m_i^{t,k} - c^k \nabla f(x^t)\|\right)
\end{align}

We can then bound $\mathbb E\|\hat{g}_i^{t,k} - \nabla f(x^t)\|$ using $L$-smoothness and bounded-data heterogeneity:
\begin{align}
    \mathbb{E}\|\hat{g}_i^{t,k} - \nabla f(x^t)\| &= \mathbb{E}\|g_i^{t,k} - \nabla f_i(x^t) + \nabla f_i(x^t) - \nabla f(x^t)\|\\
    &\leq L\mathbb{E}\|x_i^{t,k} - x^t\| + \frac{\sigma}{n} + B\\
    &\leq \eta_l KL \frac{1}{K}\sum_{k=1}^K\mathbb{E}\|\Delta_i^{t,k} + y^t - y_i^t\| + \frac{\sigma}{n} + B\\
    & \overset{(a)}{\leq} \eta_l 3KL + \frac{\sigma}{n} + B
\end{align}

Where (a) holds true by initializating $y_i^0 = \nabla f_i(x^0)/\|\nabla f_i(x^0)\|$, then we can get $\|y_i^t\| \leq 1$ and $\|y^t\| \leq 1$ for any $t \geq 0$.

We can do a similar thing for $\mathbb E\|m_i^{t,k} - c^k \nabla f(x^t)\|$:
\begin{align}
    \mathbb{E}\|m_i^{t,k} - c^k \nabla f(x^t)\| &\leq \sum_{k'=1}^k c^{k,k'} \mathbb{E}\|\hat{g}_i^{t,k} - \nabla f(x^t)\|\leq 3\eta_l KL + \frac{\sigma}{n} + B
\end{align}

By combining the results above into \eqref{eq:et_special_Lsmooth}, we can get:
\begin{equation}
    \frac{1}{T}\sum_{t=1}^T\mathbb E\|\nabla f(x^t)\| \lesssim \frac{\mathbb{E}f(x^1) - f^*}{K\eta_g\eta_l(1 - 2\beta^K)T} + \frac{\eta_g\eta_l KL}{2(1-2\beta_1^K)} + \frac{3\eta_l KL}{1-2\beta^K} + \frac{K(\sigma/n+B)}{(1-2\beta_1^K)} 
\end{equation}
Finally, by letting $\eta_g\eta_l = \min(\frac{\sqrt{\mathcal{F}n}}{\sqrt{\sigma^2 KTL}}, \frac{\mathcal{F}}{T})$, $\beta = \sqrt[K]{\frac{KN - 2T}{2KN}}$, $\eta_l \leq \frac{1}{T}$, we can get:
\begin{equation}
    \frac{1}{T}\sum_{t=1}^T\mathbb E\|\nabla f(x^t)\| \lesssim \sqrt{\frac{L\mathcal{F}}{(1-2\beta_1)nKT}} + \frac{L\mathcal{F}}{(1-\beta_1)T} + \frac{KL}{(1-2\beta_1)T} + \frac{K(\sigma + nB)}{T}
\end{equation}

\newpage
\section{Additional experiments on CIFAR datasets}
\label{appen:additional_experiments}
In this section we plot more training results on CIFAR datasets under different sampling rate and different choice of local iterations $K$. Compare between Figure~\ref{fig:addendix_cifar100_k=60} and Figure~\ref{fig:addendix_cifar100_k=10}, we can see that although {\tt FAdamGC} outperforms {\tt FA-NT} in most cases, there are still certain scenarios (sample rate = $10\%$, $K = 10$) where Naive Tracking seems to perform better than GC. However, as the sample rate increases, in both $K = 10$ and $K = 60$ set of experiments, {\tt FAdamGC} gains more steady improvement. Similar observation can also be found from experiments on CIFAR10 in Figure~\ref{fig:addendix_cifar10_k=60} and \ref{fig:addendix_cifar10_k=10}.

\begin{figure}[h]
    \centering
    \includegraphics[width=0.98\linewidth]{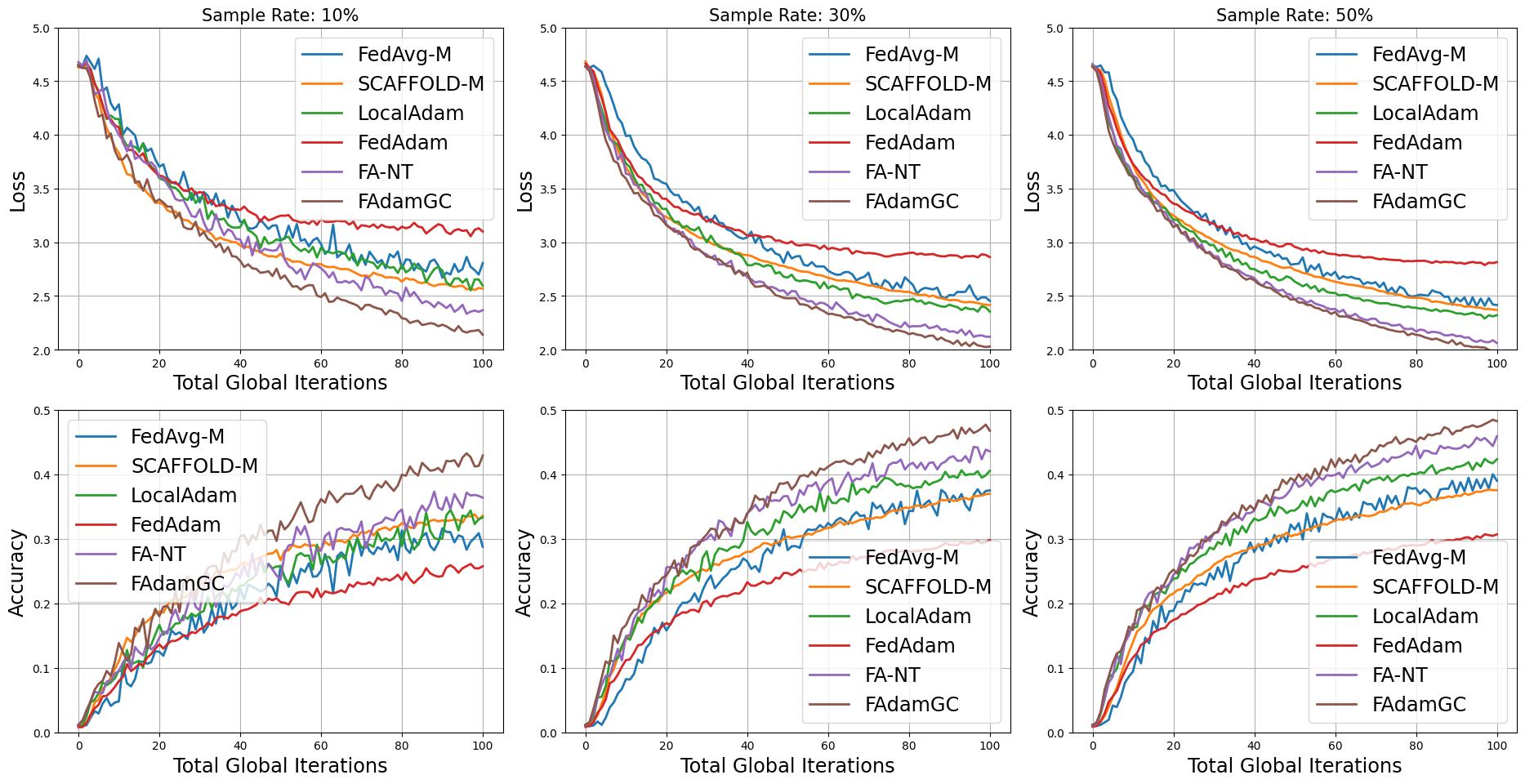}
    \caption{Experimental results on CIFAR100 under different sample rate of clients and $K = 60$.}
    \label{fig:addendix_cifar100_k=60}
\end{figure}

\begin{figure}[h]
    \centering
    \includegraphics[width=0.98\linewidth]{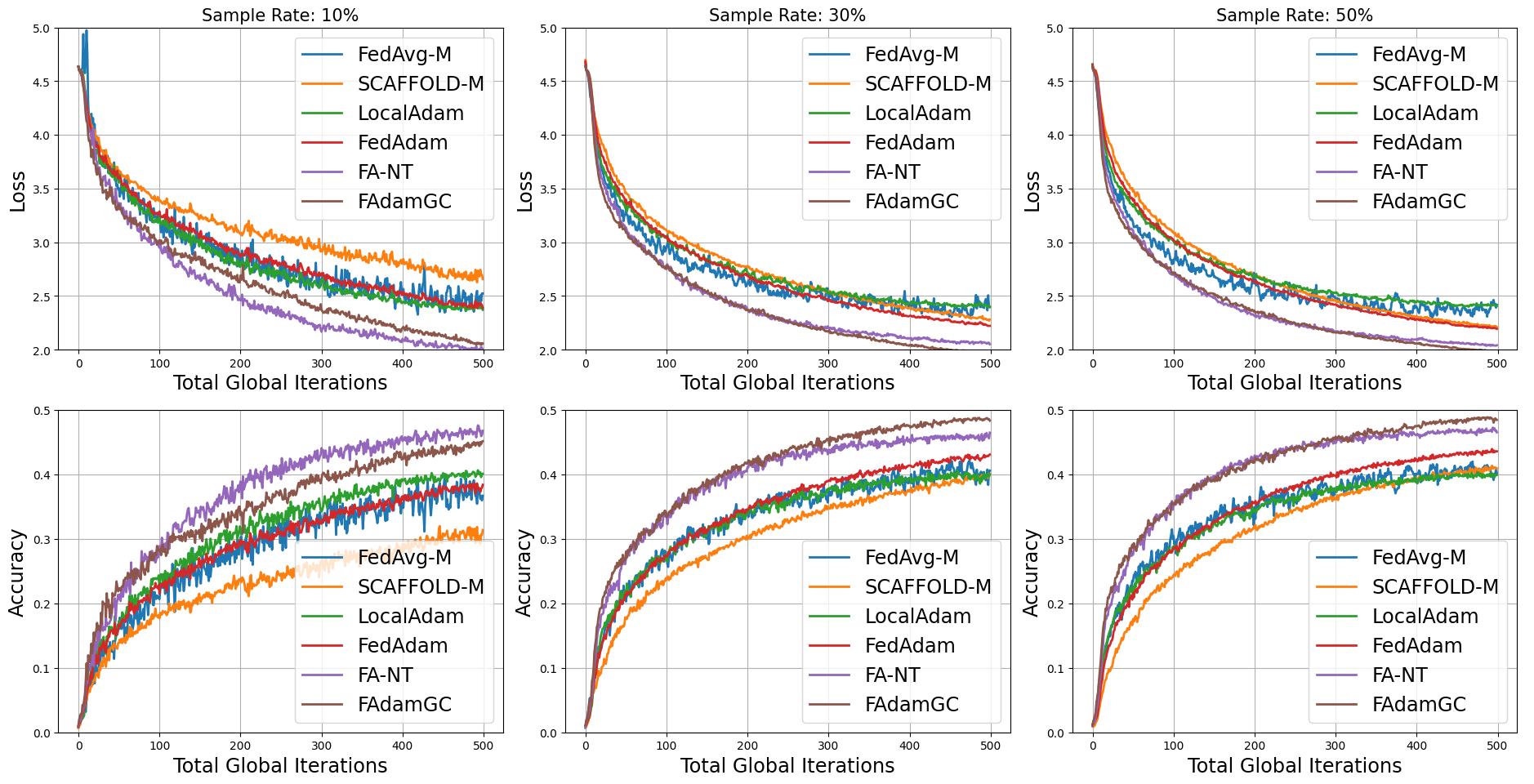}
    \caption{Experimental results on CIFAR100 under different sample rate of clients and $K = 10$.}
    \label{fig:addendix_cifar100_k=10}
\end{figure}

\begin{figure}[h]
    \centering
    \includegraphics[width=0.98\linewidth]{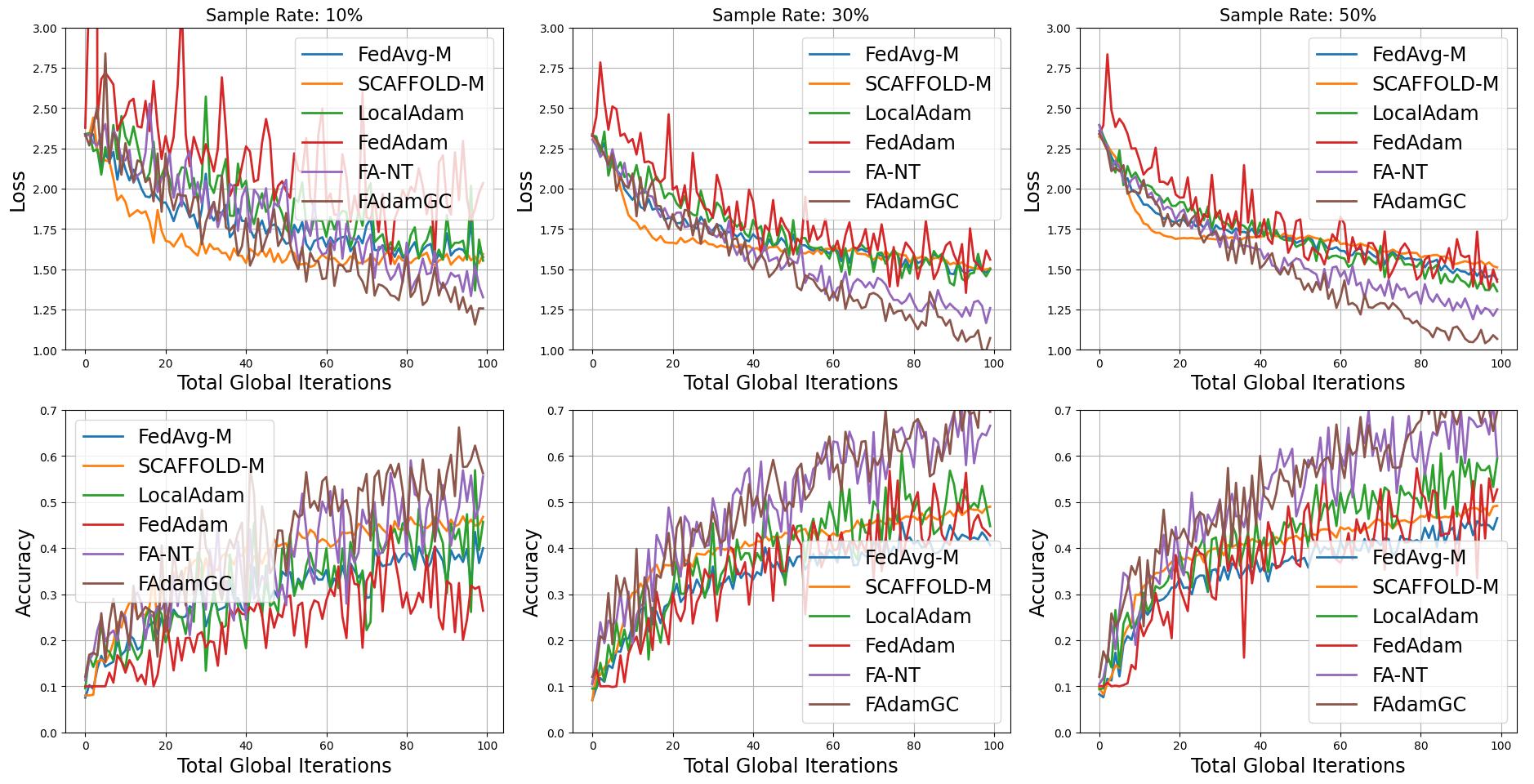}
    \caption{Experimental results on CIFAR10 under different sample rate of clients and $K = 60$.}
    \label{fig:addendix_cifar10_k=60}
\end{figure}
\begin{figure}[h]
    \centering
    \includegraphics[width=0.98\linewidth]{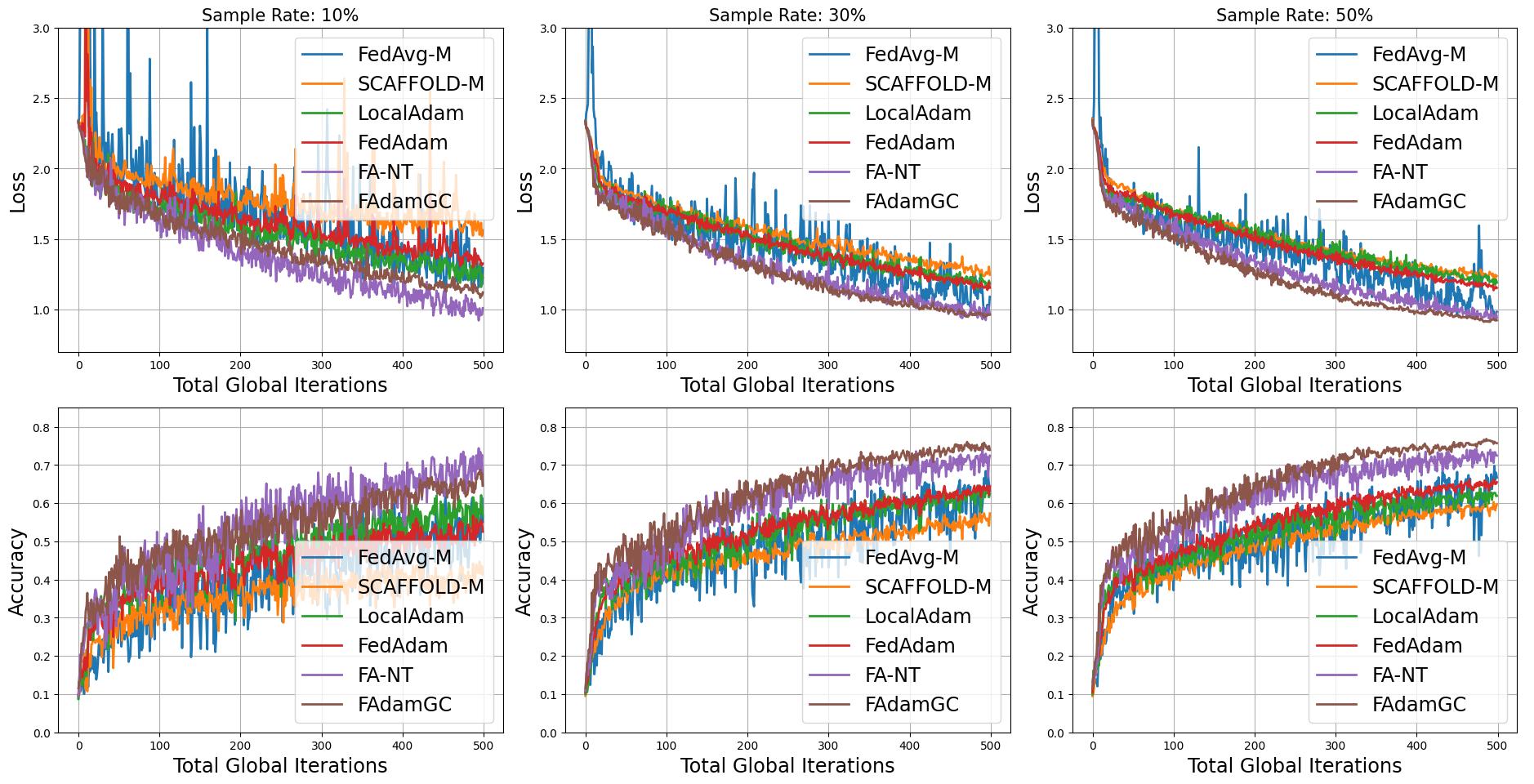}
    \caption{Experimental results on CIFAR10 under different sample rate of clients and $K = 10$.}
    \label{fig:addendix_cifar10_k=10}
\end{figure}

\begin{figure}[t]
    \centering
    \includegraphics[width=0.48\linewidth]{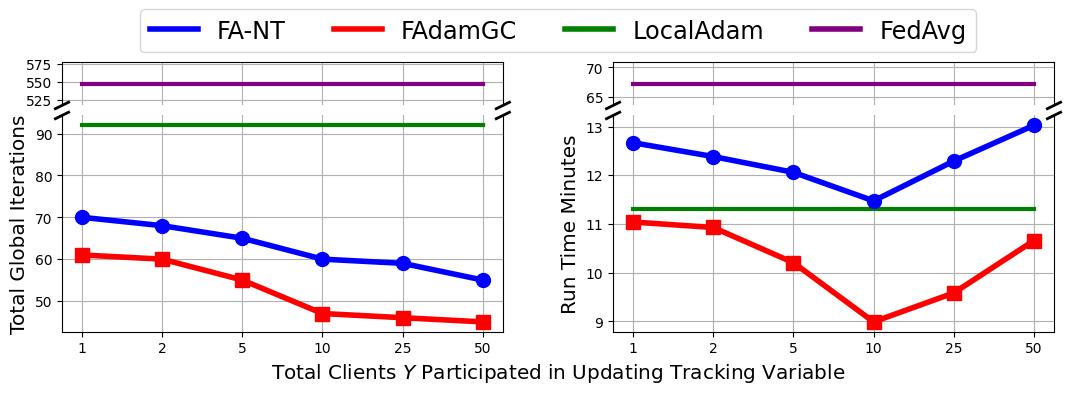}
    \caption{Comparison of total cost to attain certain accuracy between different tracking sampling rate TinyImageNet with $S = 50$, where the target accuracy is 30\%. 
    }
    \label{fig:tracking_rate_tiny}
\end{figure}

\clearpage
\section{Comparison between \texorpdfstring{$\beta_2 = 0$}{} and non zero \texorpdfstring{$\beta_2$}{} in FAdamGC and FA-NT}
\label{appen:beta2_comparison}
With Theorems~\ref{thm:pt_special} and~\ref{thm:nt_special} established, a natural question arises: \textit{Is second-moment estimation necessary in federated learning?} While our analysis demonstrates that setting $\beta_2 = 0$ allows for convergence under weaker assumptions, empirical results consistently show improved performance when $\beta_2 > 0$. This suggests that second-moment information remains valuable in practice, and that tighter theoretical guarantees for {\tt FAdamGC} and {\tt FA-NT} may be attainable, particularly if future analysis can bypass the need for bounded gradient assumptions. We show in Table~\ref{table:beta_2_comparison} that of all proposed methods, a large $\beta_2$ consistly outperforms the case of $\beta_2 = 0$.

\begin{table*}[h!]
\caption{The comparison of our methods under different constraints on $\beta_2$ values}
\label{table:beta_2_comparison}
\begin{center}
\resizebox{0.98\textwidth}{!}{
\setlength{\extrarowheight}{3pt} 
\begin{small}
\begin{tabular}{ccccc|>{\columncolor[rgb]{0.85,0.9,0.9}}c>{\columncolor[rgb]{0.85,0.9,0.9}}c}
\toprule
Settings & Dataset & LocalAdam & FA-NT ($\beta_2 = 0$)& FAdamGC ($\beta_2 = 0$) & FA-NT & FAdamGC\\
\hline
 
\multirow{3}{*}{\makecell{Total \\ Communication  \\ Rounds}}

& CIFAR-10  & 589.5\com{74.0}& 401.5\com{33.9}& 358.8\com{21.4}& 394.8\com{31.3} & \textbf{310.0}\com{16.8}\\
\hhline{~------}
& CIFAR-100  & 678.3\com{40.6}& 867.5\com{25.3}& 527.0\com{16.5}& 530.3\com{17.6} & \textbf{323.8}\com{16.3}\\
\hhline{~------}
& TinyImageNet & 177.3\com{8.3} & 164.3\com{6.7}& 85.5\com{5.7} &157.0\com{6.4} & \textbf{66.3}\com{4.4} \\
\hline

\multirow{3}{*}{\makecell{Simulated \\ Run Time \\ (minutes)}}

& CIFAR-10 & 72.38\com{74.0}& 83.44\com{33.9}& 74.56\com{21.4}& 82.07\com{31.3} & \textbf{64.42}\com{16.8}\\
\hhline{~------}
& CIFAR-100 & 83.36\com{40.6}& 180.27\com{25.3}& 109.56\com{16.5}& 110.21\com{17.6} & \textbf{67.2}\com{16.3}\\
\hhline{~------}
& TinyImageNet & 21.78\com{8.3} & 34.15\com{6.7}& 17.77\com{5.7} & 32.63\com{6.4} & \textbf{13.78}\com{4.4}\\

\bottomrule
\end{tabular}

\end{small}
} \\
\end{center}
\vspace{-0.2in}
\end{table*}

\section{Chosen Hyperparameters}
\label{appen:lr_and_target_acc}
We showed all the learning rate we used in Sec.~\ref{sec:exp}, obtained through grid search.
\begin{table*}[ht]
\caption{The hyperparameters for image classification tasks. }
\label{table:lr}
\begin{center}
\resizebox{0.98\textwidth}{!}{
\setlength{\extrarowheight}{3pt} 
\begin{small}
\begin{tabular}{ccccccccc}
\toprule
Learning Rate & Dataset & FedAvg-M & SCAFFOLD-M & FedAdam & FedAMS & LocalAdam & FA-NT & FAdamGC\\
\hline
 
\multirow{3}{*}{\makecell{$\eta_g$}}

& CIFAR-10 & 1& 1& $1\times10^{-3}$& $1\times10^{-3}$&  1& 1 & 1\\
\hhline{~--------}
& CIFAR-100 &1 &1 &$1\times10^{-3}$ &$1\times10^{-3}$ & 1 &  1& 1\\
\hhline{~--------}
& TinyImageNet & $3\times10^{-1}$& $3\times10^{-1}$&$1\times10^{-3}$ & $1\times10^{-3}$& $1\times10^{-1}$ & $3\times10^{-1}$ & $3\times10^{-1}$\\
\hline

\multirow{3}{*}{\makecell{$\eta_l$}}

& CIFAR-10 & $1\times10^{-2}$& $1\times10^{-2}$& $3\times10^{-2}$& $3\times10^{-2}$&  $1\times10^{-3}$&  $1\times10^{-3}$&$1\times10^{-3}$\\
\hhline{~--------}
& CIFAR-100 & $1\times10^{-2}$& $1\times10^{-2}$& $3\times10^{-2}$& $3\times10^{-2}$&  $1\times10^{-3}$ & $1\times10^{-3}$ &$1\times10^{-3}$\\
\hhline{~--------}
& TinyImageNet & $1\times10^{-2}$& $1\times10^{-2}$& $1\times10^{-2}$&$1\times10^{-2}$ &$1\times10^{-3}$  & $1\times10^{-3}$ &$1\times10^{-3}$ \\

\bottomrule
\end{tabular}

\end{small}
} \\
\end{center}
\vspace{-0.1in}
\end{table*}

\begin{table*}[ht]
\caption{The hyperparameters for language tasks. }
\label{table:lr_peft}
\begin{center}
\resizebox{0.98\textwidth}{!}{
\setlength{\extrarowheight}{3pt} 
\begin{small}
\begin{tabular}{ccccccccc}
\toprule
Learning Rate & Dataset & FedAvg-M & SCAFFOLD-M & FedAdam & FedAMS & LocalAdam & FA-NT & FAdamGC\\
\hline
 
\multirow{4}{*}{\makecell{$\eta_g$}}

& 20NEWSGROUPS & $1\times 10^{-1}$& $1\times 10^{-1}$& $1\times 10^{-2}$&$1\times 10^{-2}$ &$1\times 10^{-1}$ &$1\times 10^{-1}$ &$1\times 10^{-1}$\\
\hhline{~--------}
& QQP & $1\times 10^{-1}$&$1\times 10^{-1}$ & $1\times 10^{-3}$&$1\times 10^{-3}$ & $3\times 10^{-1}$ & $3\times 10^{-1}$&$3\times 10^{-1}$\\
\hhline{~--------}
& QNLI & $1\times 10^{-1}$& $1\times 10^{-1}$&$1\times 10^{-3}$ &$1\times 10^{-3}$ & $3\times 10^{-1}$& $3\times 10^{-1}$&$3\times 10^{-1}$\\
\hhline{~--------}
& SST-2 & $1\times 10^{-1}$& $1\times 10^{-1}$& $1\times 10^{-3}$& $1\times 10^{-3}$& $3\times 10^{-1}$& $3\times 10^{-1}$&$3\times 10^{-1}$\\
\hline

\multirow{4}{*}{\makecell{$\eta_l$}}

& 20NEWSGROUPS & $1\times 10^{-2}$& $1\times 10^{-2}$& $1\times 10^{-3}$&$1\times 10^{-3}$ &$5\times 10^{-3}$ &$5\times 10^{-3}$ & $5\times 10^{-3}$\\
\hhline{~--------}
& QQP & $1\times 10^{-2}$&$1\times 10^{-2}$ & $3\times 10^{-4}$& $3\times 10^{-4}$& $1\times 10^{-4}$&$1\times 10^{-4}$ &$1\times 10^{-4}$\\
\hhline{~--------}
& QNLI & $1\times 10^{-2}$&$1\times 10^{-2}$ & $3\times 10^{-4}$& $3\times 10^{-4}$& $3\times 10^{-5}$& $3\times 10^{-5}$&$3\times 10^{-5}$\\
\hhline{~--------}
& SST-2 & $1\times 10^{-2}$&$1\times 10^{-2}$ &$3\times 10^{-4}$ &$3\times 10^{-4}$ & $1\times 10^{-3}$& $1\times 10^{-3}$&$1\times 10^{-3}$\\

\bottomrule
\end{tabular}

\end{small}
} \\
\end{center}
\vspace{-0.1in}
\end{table*}

\end{document}